\newtheorem{proposition}{Proposition}
\newtheorem{theorem}{Theorem}
\newtheorem{corollary}{Corollary}[theorem]
\newtheorem{assumption}{Assumption}
\theoremstyle{definition}
\newtheorem{remark}{Remark}
\crefname{lemma}{Lemma}{Lemmas}
\crefname{proposition}{Proposition}{propositions}
\crefname{theorem}{Theorem}{Theorems}
\crefname{assumption}{Assumption}{Assumptions}
\crefname{algorithm}{Algorithm}{Algorithms}
\crefname{section}{Section}{Sections}
\crefname{corollary}{Corollary}{corollaries}
\crefname{remark}{Remark}{remarks}
\crefname{appendix}{Appendix}{appendices}
\let\ORGhypersetup\hypersetup
\protected\def\hypersetup{\ORGhypersetup}
  \def\hypersetup#1{}%
  \let\Cref\crtCref
  \let\cref\crtcref
\DeclareMathOperator{\argmin}{arg\,min}
\title{\textbf{Lassoed Forests: Random Forests with\\ Adaptive Lasso Post-selection}}
\author{Jing Shang$^{1}$ \; James Bannon$^{3}$ \; Benjamin Haibe-Kains $^{3,4}$\; Robert Tibshirani$^{1,2}$}
\date{\normalsize
$^1$Department of Statistics \quad $^2$Department of Biomedical Data Science \\
Stanford University\\
$^3$Princess Margaret Cancer Centre \quad $^4$Department of Medical Biophysics\\
University of Toronto}
\begin{document}
\maketitle

\begin{abstract}
Random forests are a statistical learning technique that use bootstrap aggregation to average high-variance and low-bias trees. Improvements to random forests, such as applying Lasso regression to the tree predictions, have been proposed in order to reduce model bias. However, these changes can sometimes degrade performance (e.g., an increase in mean squared error). In this paper, we show in theory that the relative performance of these two methods, standard and Lasso-weighted random forests, depends on the signal-to-noise ratio. We further propose a unified framework to combine random forests and Lasso selection by applying adaptive weighting and show mathematically that it can strictly outperform the other two methods. We compare the three methods through simulation, including bias-variance decomposition, error estimates evaluation, and variable importance analysis. We also show the versatility of our method by applications to a variety of real-world datasets.
\end{abstract}

\section{Introduction}
Tree-based methods are a family of non-parametric approaches in supervised learning. Random forests use a form of bootstrap aggregation, or \textit{bagging}, to combine a large collection of trees and produce a final prediction. In regression problems, it gives the same weight to each tree and computes the average out-of-bag prediction. In classification problems, it assigns class labels by majority vote. However, since a single-tree model is known to have high variance, a large number of trees need to be trained and aggregated in order to reduce variance \cite**{hastie2009random}. This can lead to redundant trees, as the bootstrap procedure may select similar sets of samples to train different trees. Moreover, increasing  the number of trees does not reduce the bias.

Post-selection boosting random forests, proposed by \citeasnoun{wang2021improving}, is an attempt to reduce bias by applying Lasso regression \cite{tibshirani1996regression} on the predictions from each individual tree. The method returns a sparser forest with fewer trees, as well as different weights assigned to each individual tree. However, our numerical experiments show that the post-selection method only leads to an improvement in prediction error under certain signal-to-noise ratio conditions. In other words, the post-selection method can hurt performance in some cases.

Based on this observation, this paper proposes a unified framework to combine random forest and Lasso selection in an adaptive manner. For simplicity of notation, we refer to the classic random forest \cite{breiman2001random} that takes simple average over out-of-bag predictions as \textit{vanilla forest}, the post-selection boosting random forest as \textit{post-selection forest}, and our proposed method as \textit{Lassoed forest}. The Lassoed forest treats the predictions from vanilla forest as an offset, and applies adaptive weights on the Lasso selection as well as the offset term. Our method estimates the optimal weights by assessing the out-of-bag and cross-validation error estimates. We note that both the vanilla forest and post-selection forest are special cases of our proposed method, with all weights on either the Lasso selection or the offset term. Hence, our method is a direct generalization and a potential improvement as it uses adaptive weights based on data.

\subsection{Motivating Examples} \label{subsec:examples}

\textbf{A. California House Prices Prediction.}
First, we show an example where the post-selection forest outperforms  the vanilla forest. We use the California housing dataset from \citeasnoun{pace1997sparse}. It contains 9 predictors with 20,640 observations, including median income, housing median age, total rooms, etc. It forms a regression problem, with log-transformed median house value being the response variable.

\autoref{fig:cahousing} shows that in this regression task, the post-selection forest outperforms vanilla forest, and its advantage increases with the number of trees constructed. Our proposed Lassoed forest  appropriately gives more weight to the post-selection forest. An intuitive explanation of this improvement is that the strong predictors in this problem (e.g., housing age has high positive correlation with price) allow the Lasso to deliver extra bias reduction without introducing too much variance.
\begin{figure}[H]
    \centering
    \includegraphics[width=0.75\linewidth]{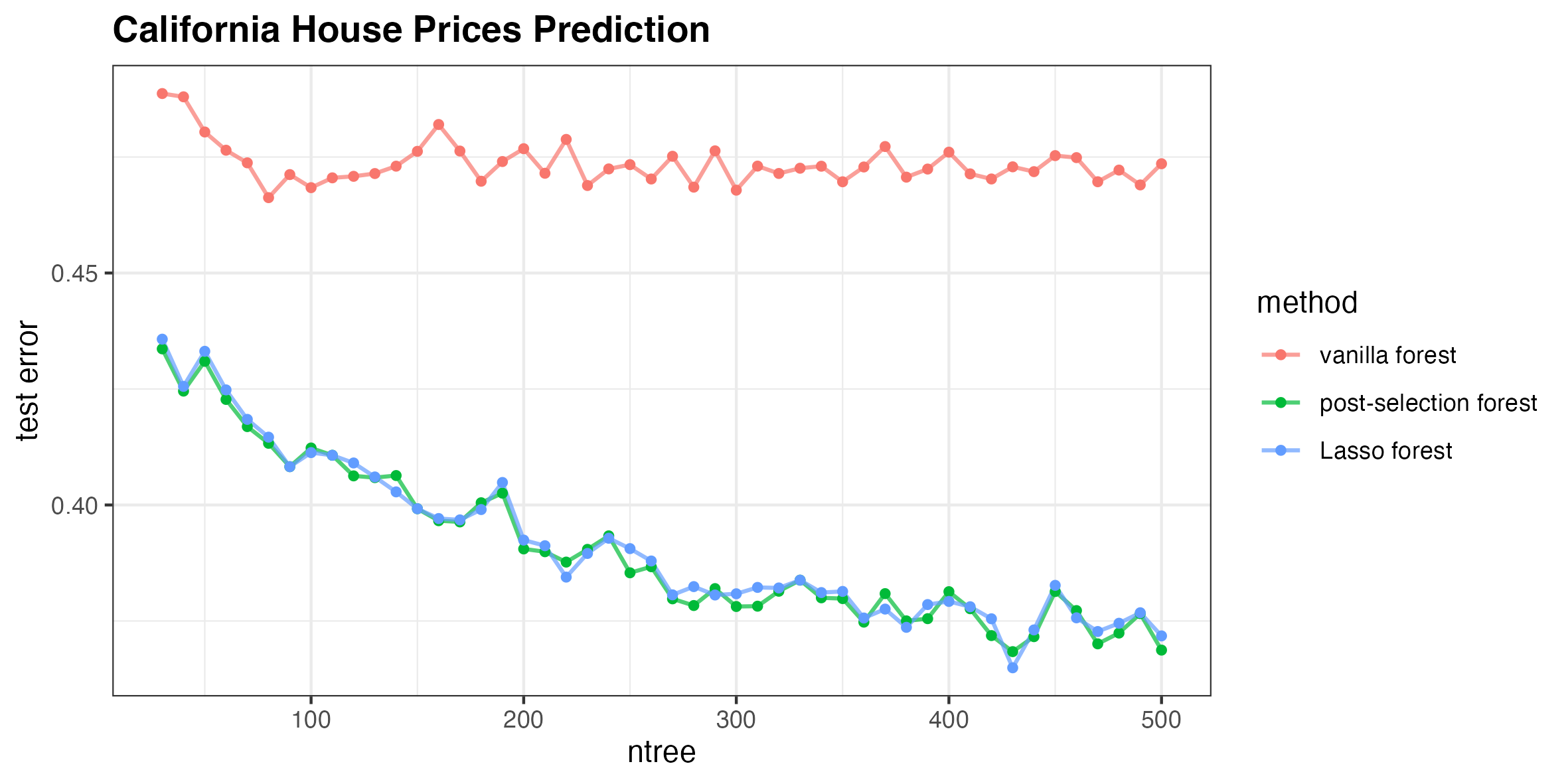}
    \caption{\em{Test Error for California House Prices Prediction}}
    \label{fig:cahousing}
\end{figure}

\noindent\textbf{B. Spam Classification.}
Next, we show that the post-selection forest does not guarantee an improvement, which motivates our proposed Lassoed forest. We apply all three methods on the Spambase dataset from \citeasnoun**{hopkins1999spambase}. It contains 57 words with high frequency extracted from 4,601 email messages, each labeled as spam or non-spam. The task is to train a classifier to filter out spam emails.

The post-selection forest can be readily extended to such classification problem in the following way: first we fit a random forest to predict class probability with a fixed number of trees, then we run a generalized linear model with the binomial family and Lasso regularization. \autoref{fig:spam} shows that the post-selection forest performs worse than vanilla forest in this classification task, though the gap shrinks as the number of trees increases. Again, our proposed Lassoed forest puts virtually all of the weight on the vanilla forest. A heuristic explanation is that when each tree is built on weak predictors (e.g., a single word could have low correlation with the class label), the Lasso reduces very little bias, while adding variance.

\begin{figure}[H]
    \centering
    \includegraphics[width=0.75\linewidth]{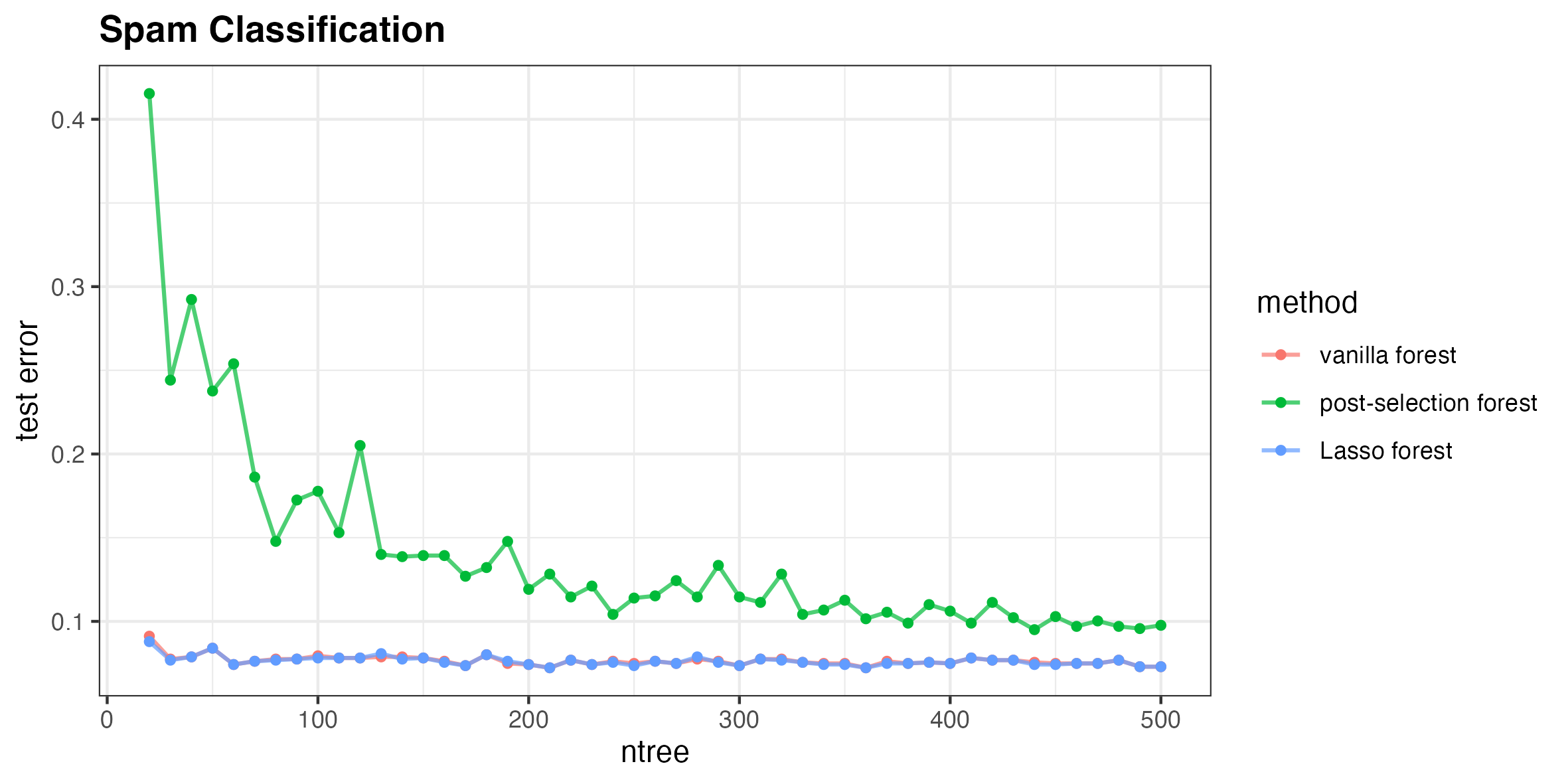}
    \caption{\em{Test Error for Spam Classification}}
    \label{fig:spam}
\end{figure}

\subsection{Outline of Paper}
The rest of this paper is structured in the following way: first, we formally introduce the formulation of the Lassoed forest in \cref{sec:methodology}. As mentioned above, it is motivated by the observation of different performances from vanilla forest and post-selection forest in different cases. In \cref{sec:theory}, we provide a theoretical analysis of our proposed method, including the importance of the signal-to-noise ratio, a bias-variance analysis, and theoretical guarantees of improvement in prediction error. Next, we show empirical evidence of prediction error improvement, through simulation studies with various forms of data generating processes in \cref{sec:simulation}, as well as applications in real world biomedical datasets in \cref{sec:application}. Lastly, we conclude with discussions and potential generalizations in \cref{sec:discussion}.

\subsection{Related Work}\label{subsec:related}
Random forests were first proposed by \citeasnoun{ho1995random}, followed by \citeasnoun{breiman2001random} providing theoretical guarantees for generalization error. \citeasnoun{friedman2008predictive} generalized the idea of rule ensembles in random forest, discussed variable importance assessment, and presented techniques to identify interactions among variables.

Numerous improvements have been made since the original debut of random forests. \citeasnoun**{robnik2004improving} proposed using attribute evaluation measures as split selectors to decrease correlation of trees in the forest. \citeasnoun**{bernard2009selection} showed that the tree ensembles can have better performance with subsets of decision trees. Since then, a lot of effort has been aimed at limiting the number of trees and finding the optimal subset of trees, also referred to as \textit{pruning} of a random forest. \citeasnoun**{paul2018improved} proposed a method that simultaneously performs feature selection based on importance ranking, and finds the optimal number of trees based on classification accuracy. The most relevant work to our paper includes the hedged random forest proposed by \citeasnoun**{beck2024hedged} that re-weights trees by minimizing the estimated mean squared error, the weighted random forest by \citeasnoun**{winham2013weighted} that utilizes the prediction error reduction as weights of trees, and the post-selection boosting random forest by \citeasnoun**{wang2021improving} that adds a Lasso regression step after obtaining individual tree fits.

Another related direction is using nonparametric models, such as random forest, to perform variable selection. \citeasnoun{huang2010variable} discussed using adaptive B-spline basis expansion and group
Lasso to perform variable selection in nonparametric additive models. \citeasnoun{chouldechova2015generalized} used blockwise coordinate descent to fit sparse generalized additive
models. Other major generalizations include the generalized random forest proposed by \citeasnoun{athey2019generalized}, which extends the application of random forest beyond standard conditional mean estimation.

\section{Methodology} \label{sec:methodology}
\subsection{Review of Post-selection Forests}
Standard random forests aggregate the tree predictions fit to bootstrapped data. Each tree draws a bootstrap sample of size $N$ from the training data and treats them as in-bag samples. Then, it recursively selects a fixed number of variables at random, pick the best split variable and splits the nodes until the minimum node size is reached. The subset of samples not used in training are out-of-bag samples, which are often used to assess the model.

The idea of a post-selection forest, or post-selection boosting random forest as in \citeasnoun{wang2021improving}, is to perform Lasso regression on the fitted tree predictions, and hence select a subset of trees to be included in the final prediction aggregation. It is designed to reduce the bias of each individual tree. 

Formally, suppose we have i.i.d. data $\mathcal{D} = \{(\mathbf{x}_1, y_1), (\mathbf{x}_2, y_2),...(\mathbf{x}_N, y_N)\}$, where $\mathbf{x}_i \in \mathbb{R}^p$ for $i=1,2,...N$. First, we use a vanilla forest to generate $J$ trees $\{T_j\}_{j=1}^J$ by bootstrap sampling. Next, instead of taking the simple average of all out-of-bag fitted tree values $\widehat{\mathbf{T}}_i^{oob}= \left\{\widehat{T}_{ij}|i \notin j^{th}\text{ bootstrap sample}, 1\leq j\leq J\right\}$, the post-selection forest treats all fitted predictions as features and applies an $l_1$-penalty on the weights for each tree. The weights solve the following optimization problem:
\begin{equation} \label{eq:LF}
    \mathbf{\gamma} = \argmin_\gamma L_1(\gamma)=\frac{1}{N}\sum_{i=1}^N\left(y_i-\gamma_0-\sum_{j=1}^J \widehat{T}_{ij}\gamma_j\right)^2+\lambda\sum_{j=0}^J|\gamma_j|
\end{equation}

In practice, $\lambda$ can be chosen by cross-validation. \autoref{fig:RFandLF} visualizes and compares the vanilla and post-selection forest. To avoid overfitting, we first divide the training set $\mathcal{D}$ into two halves, using the first half to train the random forest and the second half to train the Lasso selection. This procedure, which we call {\em cross-fitting} is detailed in  \cref{alg:LF}.

\begin{figure}[H]
     \centering
     \begin{subfigure}[b]{0.49\textwidth}
         \centering
         \includegraphics[width=\textwidth]{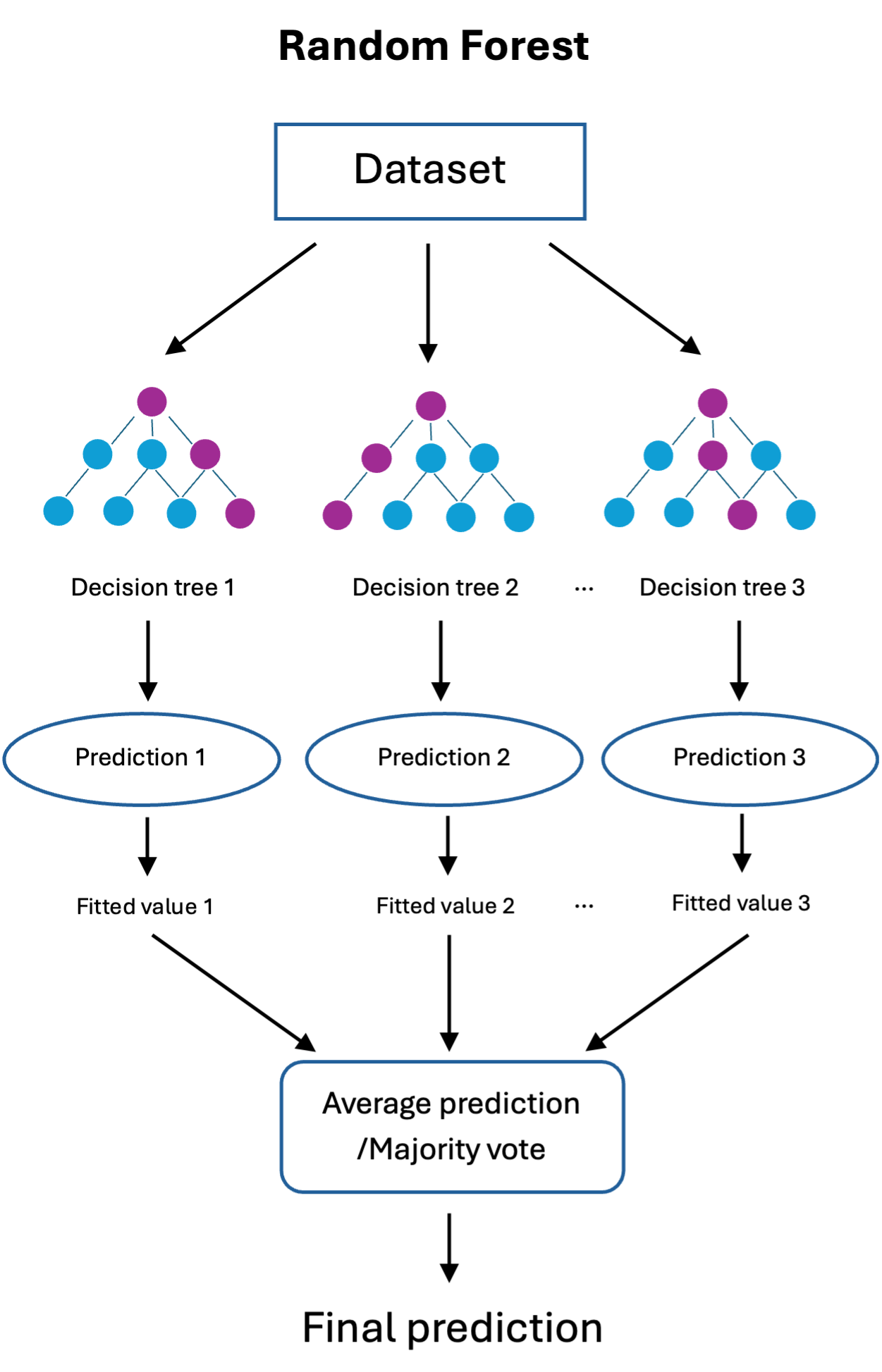}
         \caption{\em{Random Forest}}
         \label{fig:RF}
     \end{subfigure}
     \hfill
     \begin{subfigure}[b]{0.49\textwidth}
         \centering
         \includegraphics[width=\textwidth]{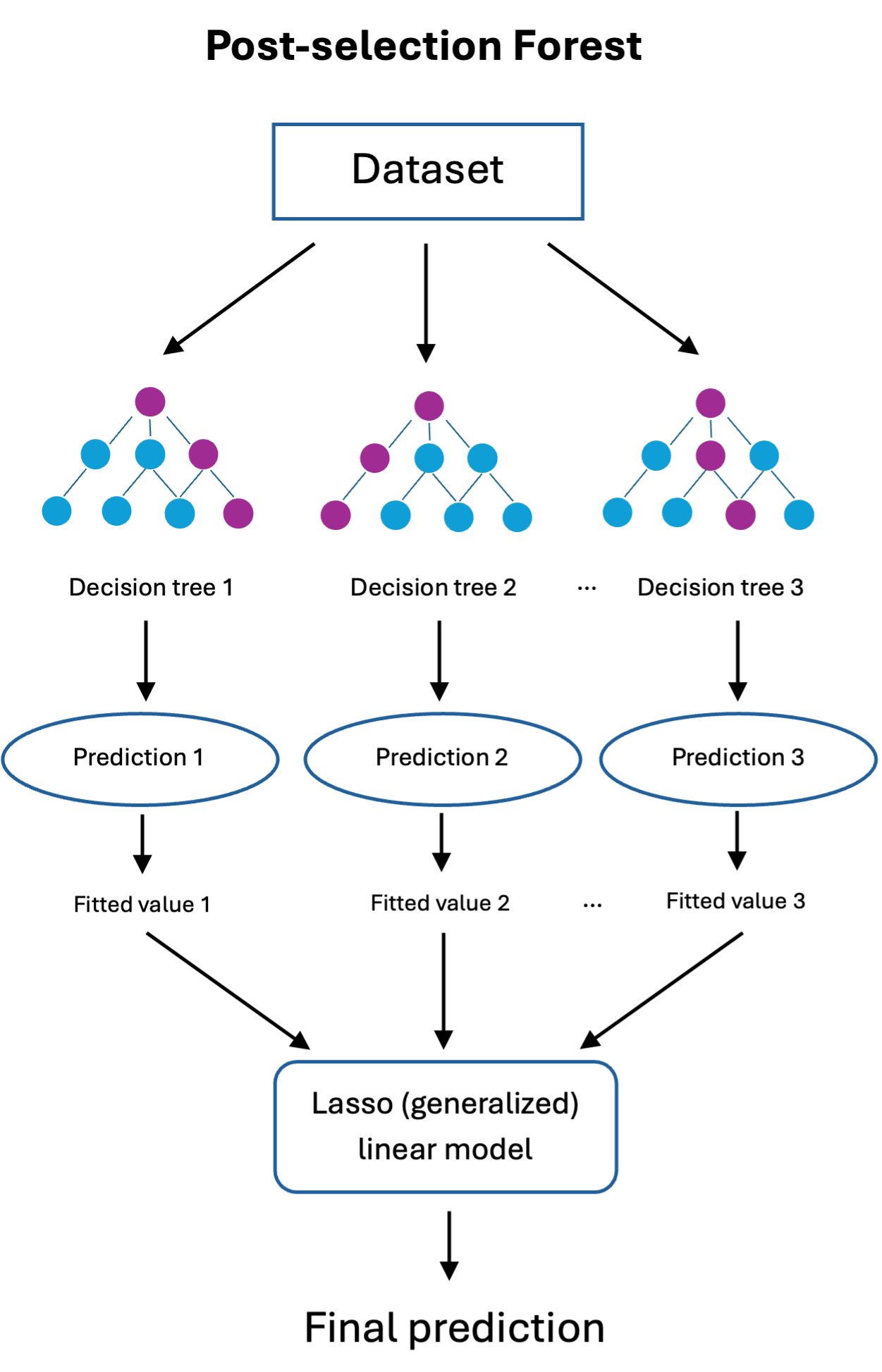}
         \caption{\em{Post-selection Forest}}
         \label{fig:PF}
     \end{subfigure}
        \caption{\em{Comparison of Random Forest and Post-Selection Forest}}
        \label{fig:RFandLF}
\end{figure}

\vspace{3mm}
\begin{algorithm}[H]
\setlength{\itemsep}{-1pt}
  \setlength{\parskip}{-1pt}
  \setlength{\parsep}{-1pt}
\caption{Post-selection Forest with Cross-fitting}\label{alg:LF}
\begin{algorithmic}
\State \textbf{Input:} training data $\mathcal{D}$ split into two halves $\mathcal{D}^1$ and $\mathcal{D}^2$, each of size N/2
\State
\For{$j$ in $1:J$}
\State \vspace{-5mm}

\begin{enumerate}
    \item Draw a bootstrap sample with replacement $Z^*$ of size $N/2$ from $\mathcal{D}^1$
    \item Grow a tree $T_j$ based on $Z^*$
    \item Obtain a vector of fitted value for each observation in $\mathcal{D}^2$: $\left(\widehat{T}_{1j}, \widehat{T}_{2j},...\widehat{T}_{(N/2)j}\right)$
\end{enumerate}
\EndFor
\State
\State $\hat{\gamma}\gets$ Lasso regression $y \sim \widehat{\mathbf{T}}$ by solving \eqref{eq:LF} \Comment{$y \in \mathcal{D}^2$, $\widehat{\mathbf{T}} \in \mathbb{R}^{(N/2)\times J}$ also for $\mathcal{D}^2$, }
\State \Comment{choose $\lambda$ by cross-validation}
\State
\State \textbf{Output:} given a new point $\mathbf{x}$, $\hat{y}\gets \left(\widehat{T}_1(\mathbf{x}),\widehat{T}_2(\mathbf{x}),...\widehat{T}_J(\mathbf{x})\right)\cdot \hat{\gamma}$ \Comment{evaluate trees at $\mathbf{x}$}
\end{algorithmic}
\end{algorithm}

\subsection{Our Proposal: Lassoed Forests}
As discussed in the introduction, we have observed that the performance of a post-selection forest can be worse than that of a vanilla forest under certain in some circumstances. Hence, we introduce the Lassoed forest that uses a weighted average of the vanilla forest and post-selection forest, and learns the optimal weights from data.

Formally, in addition to the set of weights $\{\gamma_j\}_{j=1}^J$ for each tree, we introduce another set of weights $\{\theta, 1-\theta\}$ to adaptively construct predictions from the vanilla and post-selection forests. For each fixed $\theta$, we solve:
\begin{equation} \label{eq:LARF}
    \mathbf{\gamma} = \argmin_\gamma L_2(\gamma)=\frac{1}{N}\sum_{i=1}^N\left(y_i-(1-\theta)\frac{1}{|O_i|}\sum_{j\in O_i} \widehat{T}_{ij}-\theta\gamma_0 -\theta \sum_{j=1}^J \widehat{T}_{ij}\gamma_j\right)^2+\lambda\sum_{j=0}^J|\gamma_j|
\end{equation}
where the aggregate out-of-bag prediction $\frac{1}{|O_i|}\sum_{j\in O_i} \widehat{T}_{ij}$ with weight $(1-\theta)$ serves as an offset term in Lasso regression, $O_i = \{j|i\notin Z^* \text{ used to train } T_j\}$. In other words, its coefficient is fixed at one. When $\theta = 0$ and $\theta = 1$, \eqref{eq:LARF} reduces to vanilla and post-selection forest respectively. 

We select the optimal $\hat{\theta}$ by comparing the cross-validation error across a grid of values for $\theta$. Note that when $\theta = 0$, \eqref{eq:LARF} only uses the vanilla forest predictions, where the prediction error could be otherwise estimated by out-of-bag error. \autoref{fig:LARF} illustrates the Lassoed forest procedure. As before, we use cross-fitting to train the vanilla forest and Lasso regression, in order to avoid overfitting at the second stage. \cref{alg:LARF} presents the method with cross-fitting in detail.

\begin{figure}[H]
    \centering
    \includegraphics[width=0.43\linewidth]{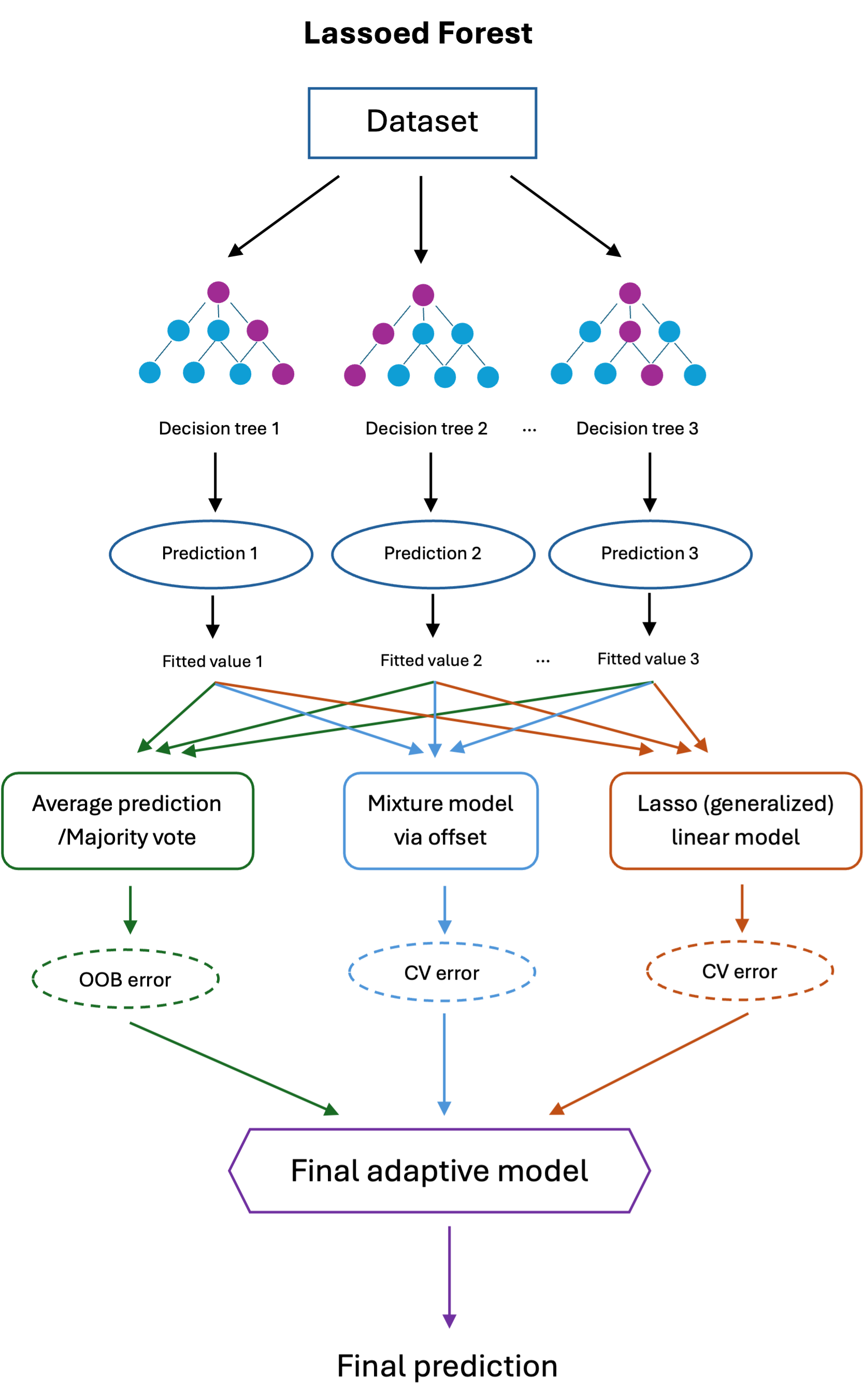}
    \caption{\em{Lassoed Forest}}
    \label{fig:LARF}
\end{figure}

\begin{algorithm}[H]
\caption{Lassoed Forest with Cross-fitting}\label{alg:LARF}
\begin{algorithmic}  \setlength{\itemsep}{-1pt}
  \setlength{\parskip}{-1pt}
  \setlength{\parsep}{-1pt}
\State \textbf{Input:} training data $\mathcal{D}$ split into two halves $\mathcal{D}^1$ and $\mathcal{D}^2$, each of size N/2 \footnotemark

\State
\For{$j$ in $1:J$}
\State \vspace{-5mm}
\begin{enumerate}
    \item Draw a bootstrap sample $Z^*$ of size $N/2$ from $\mathcal{D}^1$
    \item Grow a tree $T_j$ based on $Z^*$
    \item Obtain a vector of fitted value for each observation in $\mathcal{D}^2$: $\left(\widehat{T}_{1j}, \widehat{T}_{2j},...\widehat{T}_{(N/2)j}\right)$
\end{enumerate}
\EndFor
\State
\For{$\theta$ in $[0,1]$} 
\State \vspace{-5mm}
\begin{enumerate}
    \item $\hat{\gamma}\gets$ Lasso regression $y-(1-\theta)\overline{\mathbf{T}}\sim \theta\widehat{\mathbf{T}}$ by solving \eqref{eq:LARF} \Comment{using data in $\mathcal{D}^2$}
    \item Calculate cross-validation error $\hat{e}_{cv}$ for Lasso fit \Comment{out-of-bag error if $\theta = 0$}
\end{enumerate}
\EndFor
\State
\State $\hat{\theta}\gets \arg\min_\theta\hat{e}_{cv}(\theta)$
\State
\State \textbf{Output:} given a new point $\mathbf{x}$, $\hat{y}\gets \hat{\theta} \cdot \left(\widehat{T}_1(\mathbf{x}),\widehat{T}_2(\mathbf{x}),...\widehat{T}_J(\mathbf{x})\right)\cdot \hat{\gamma} + (1-\hat{\theta})\cdot\overline{\mathbf{T}}$ 
\end{algorithmic}
\end{algorithm}
\footnotetext{The half-splitting is appropriate for a moderate-sized dataset, e.g. $n=144$ in our immune checkpoint case study in \autoref{sec:application}.}

\section{Theory} \label{sec:theory}
\subsection{Signal-to-noise Ratio Dependency}
The motivating examples in \cref{subsec:examples} suggest that there could be underlying factors that dominate the relative performance of vanilla forest and post-selection forest. In this section, we show in theory that the signal-to-noise ratio determines their relative performance. As mentioned in \cref{subsec:related}, if we treat each tree as a base learner, then the post-selection forest given by \eqref{eq:LF} is just a special case of rule ensembles \cite{friedman2008predictive}. In this section, we show results that are model-free and can be generalized to post-selection of any base learners satisfying minimum requirements.

\vspace{2mm}
We start with the following model-agnostic assumption on the data:
\vspace{-1mm}
\begin{assumption}
    Assume $\mathcal{D} = \{(\mathbf{x}_i, y_i)\}_{i=1}^N$ are i.i.d. data, with $y_i = g(\mathbf{x}_i)+ \epsilon_i$, $\epsilon_i \perp\!\!\!\perp \mathbf{x}_i$. We assume $\epsilon_i$ follows some distribution with mean $0$ and variance $\sigma^2$. We also assume $\mathbf{x}_i$ has covariance matrix $\Sigma$, its function $g(\mathbf{x}_i)$ has mean $\mu$ and variance $\phi(\Sigma)$. We define the signal-to-noise ratio (SNR) as:
    \begin{equation*}
        s:=\frac{\text{Var}(g(\mathbf{x}_i))}{\sigma^2}=\frac{\phi(\Sigma)}{\sigma^2}.
    \end{equation*}We keep the signal level $\phi(\Sigma)$ fixed, and increase $s$ by decreasing $\sigma^2$.
\end{assumption}

\begin{remark} \label{rm:SNR}
For the signal-to-noise ratio $s$,
\begin{enumerate}
    \item It is defined more generally without modeling assumptions. When the underlying model is linear $y_i = \mathbf{x}_i^T\beta+\epsilon_i$, it agrees with the standard form $s = \frac{\text{Var}(\mathbf{x}_i^T\beta)}{\sigma^2} = \frac{\beta^T\Sigma\beta}{\sigma^2}$; 
    \item It is invariant to constant location shifts and scaling in response $y$. In practice, while neither location shifts nor scaling changes the behavior of random forest, a non-parametric procedure, they indeed impact the quality of out-of-bag and cross-validation error estimates.
\end{enumerate}
\end{remark}

We define $\{f_j\}_{j=1}^J$ as exogenous base learners that are given or pre-trained, i.e. do not depend on $\mathcal{D}$. We do not make any assumptions on independency, as is the case in training on bootstrap samples. We are interested in comparing the two predictions for a given new independent point $\mathbf{x}^*$ from the same joint distribution, with $y^* = g(\mathbf{x}^*)+ \epsilon$:
\begin{align}
    \hat{y}^{mean} &=\frac{1}{J}\sum_{j=1}^J f_j(\mathbf{x}^*)\label{eq:est_mean}\\
     \hat{y}^{reg} &= \hat{\gamma}_0 +\sum_{j=1}^J \hat{\gamma}_j f_j(\mathbf{x}^*) \label{eq:est_reg}
\end{align}
where $\hat{\gamma}_j = \argmin_{\gamma_j} \frac{1}{N}\sum_{i=1}^N\left(y_i - \gamma_0-\sum_{j=1}^J \gamma_j f_j(\mathbf{x}_i)\right)^2 + \lambda ||\gamma ||_1$. We assume that all base learners are from the function class. This is a common case in model aggregating and boosting, such as random forest, Adaboost \cite{freund2001adaptive}, etc. We make the following further assumptions about the base learners:

\begin{assumption} \label{assump:equibias}\textup{\textbf{(Equibias)}} We assume the base learners are conditionally equibiased for a given finite sample size, and the bias term depends on the signal-to-noise ratio,
\begin{equation*}
\begin{aligned}
&\mathbb{E}[f_j(\mathbf{x}_i)|\mathbf{x}_i] = g(\mathbf{x}_i) + \eta(s),\quad \forall j \\
&\mathbb{E}[f_j] = \mu+ \eta(s),\quad \forall j
\end{aligned}
\end{equation*}
where $\eta(\cdot)$ is a monotonically non-increasing function of $s$ \cite{box1988signal}, without loss of generality assuming $\eta(\cdot)\geq0$.
\end{assumption}

\begin{assumption} \label{assump:homoscedasticity}\textup{\textbf{(Homoscedasticity)}} We assume the base learners share the same covariance, while still allow the conditional ones to depend on a given data point,
\begin{equation*}
\begin{aligned}
&\textup{Var}(f_j(\mathbf{x}_i)|\mathbf{x}_i) = \tau^2(\mathbf{x}_i, s), \qquad \mathbb{E}[\tau^2(\mathbf{x}_i, s)] = \psi(s),\quad \forall j\\
& \textup{Cov}(f_j(\mathbf{x}_i), f_k(\mathbf{x}_i)|\mathbf{x}_i) = \rho(\mathbf{x}_i, s), \qquad \mathbb{E}[\rho(\mathbf{x}_i, s)] = \omega(s),\quad \forall j<k
\end{aligned}
\end{equation*}
where $\tau^2(\cdot)$, $\psi(\cdot)$, $\rho(\cdot)$ and $\omega(\cdot)$ are all monotonically non-increasing functions of $s$ \cite{box1988signal}. We also assume $\rho$ and $\omega$ are lower bounded by 0.
\end{assumption}

\begin{assumption} \label{assump:expressivity}\textup{\textbf{(Expressivity)}} We assume the base learners are flexible in the following way, 
\begin{equation*}
\begin{aligned}
&\mathbb{E}\left[y_i|\{f_j(\mathbf{x}_i)\}_{j=1}^J\right] = \gamma_0+\sum_{j=1}^J \gamma_j f_j(\mathbf{x}_i) \\
&\textup{Var}\left(y_i|\{f_j(\mathbf{x}_i)\}_{j=1}^J\right) = \sigma^2
\end{aligned}
\end{equation*}
which implies $g(\mathbf{x}_i) = \gamma_0+\sum_{j=1}^J \gamma_j f_j(\mathbf{x}_i)$, $\forall i$, and $\gamma_0 = -\eta(s)$, $\sum_{j=1}^J\gamma_j = 1$. It is a reasonable assumption for learners that are well known to have low bias but high variance, such as random forests \cite**{hastie2009random} or neural networks.
\end{assumption}

These assumption lead to the following fact about the base learners:
\begin{proposition} \label{prop:SNR}
    Under \cref{assump:equibias,assump:homoscedasticity,assump:expressivity}, we additionally assume both the expectation and covariance of Lasso coefficients conditioning on pre-trained base learners are monotonically non-increasing in SNR. Then the mean squared error for both model predictions \eqref{eq:est_mean} and \eqref{eq:est_reg} are also monotonically non-increasing functions of SNR, but have different dependency on the SNR.
\end{proposition}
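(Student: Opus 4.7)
The plan is a direct bias--variance decomposition of each predictor, followed by term-by-term tracking of monotonicity in $s$. In both cases I condition on the exogenous base learners $\{f_j\}$, which are independent of $\mathcal{D}$ and of $(\mathbf{x}^*,y^*)$ by assumption. The irreducible noise $\sigma^2 = \phi(\Sigma)/s$ is decreasing in $s$ by construction of the SNR, so the remaining work lies in the bias and variance contributions.

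For the simple-average predictor $\hat{y}^{mean}$, the Equibias assumption immediately yields $\mathbb{E}[\hat{y}^{mean}\mid \mathbf{x}^*] = g(\mathbf{x}^*) + \eta(s)$, giving squared bias $\eta(s)^2$. Applying Homoscedasticity together with the law of total variance produces
\begin{equation*}
    \mathrm{Var}(\hat{y}^{mean}) = \frac{\psi(s)}{J} + \frac{J-1}{J}\,\omega(s)
\end{equation*}
after the outer expectation over $\mathbf{x}^*$. Each ingredient---$\eta(s)^2$, $\psi(s)$, $\omega(s)$, $\sigma^2$---is non-increasing in $s$, so $\mathbb{E}[(y^*-\hat{y}^{mean})^2]$ is as well.

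For the Lasso predictor $\hat{y}^{reg}$, I invoke the Expressivity assumption to write $g(\mathbf{x}^*) = \gamma_0 + \sum_j \gamma_j f_j(\mathbf{x}^*)$ so that
\begin{equation*}
    y^* - \hat{y}^{reg} = \epsilon + (\gamma_0 - \hat{\gamma}_0) + \sum_{j=1}^J (\gamma_j - \hat{\gamma}_j)\,f_j(\mathbf{x}^*).
\end{equation*}
Letting $\mathbf{f}^* = (1, f_1(\mathbf{x}^*),\dots,f_J(\mathbf{x}^*))^T$ and conditioning on $\{f_j\}$, the tower property decomposes the MSE into $\sigma^2$, a squared-bias term $(\mathbb{E}[\hat{\gamma}\mid\{f_j\}]-\gamma)^T\,\mathbb{E}[\mathbf{f}^*(\mathbf{f}^*)^T]\,(\mathbb{E}[\hat{\gamma}\mid\{f_j\}]-\gamma)$, and a variance term $\mathrm{tr}\bigl(\mathrm{Cov}(\hat{\gamma}\mid\{f_j\})\,\mathbb{E}[\mathbf{f}^*(\mathbf{f}^*)^T]\bigr)$. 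The additional hypothesis of the proposition makes the conditional bias vector and covariance matrix of $\hat{\gamma}$ non-increasing in $s$ (in the Euclidean and Loewner senses, respectively), and the entries of $\mathbb{E}[\mathbf{f}^*(\mathbf{f}^*)^T]$ reduce via Equibias and Homoscedasticity to affine combinations of $\eta(s)^2$, $\psi(s)$, $\omega(s)$, and the fixed quantity $\phi(\Sigma)$, each non-increasing in $s$. Hence every summand is non-increasing, proving monotonicity of the Lasso MSE.

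The main obstacle is the ``different dependency'' clause, which is qualitative rather than quantitative. I would address it by displaying the two closed-form MSE expressions side by side: the simple-average MSE depends on $s$ only through $\{\eta,\psi,\omega,\sigma^2\}$, whereas the Lasso MSE depends on $s$ additionally through the coefficient-level bias and covariance of $\hat{\gamma}$. These extra $s$-dependent factors multiply the $\mathbb{E}[\mathbf{f}^*(\mathbf{f}^*)^T]$ block, and in general yield a functional form in $s$ that is not proportional to the simple-average expression---which is precisely the qualitative ``different dependency'' claim, and what motivates the adaptive weighting in the rest of the paper.
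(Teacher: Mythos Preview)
Your approach is essentially the paper's: a bias--variance decomposition for each predictor followed by term-by-term monotonicity checks. The treatment of $\hat{y}^{mean}$ matches the paper (your displayed formula is the expected \emph{conditional} variance given $\mathbf{x}^*$, which is the right object for the pointwise decomposition; the paper writes the unconditional variance and picks up an extra $\phi(\Sigma)$, but the MSE conclusion is identical).

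There is, however, a genuine gap in the $\hat{y}^{reg}$ step. You cast the variance contribution as $\mathrm{tr}\bigl(\mathrm{Cov}(\hat{\gamma}\mid\mathcal{F})\,\mathbb{E}[\mathbf{f}^*(\mathbf{f}^*)^T]\bigr)$ and invoke \emph{Loewner} monotonicity for the covariance together with \emph{entrywise} monotonicity for the second-moment matrix. That combination does not force the trace to be monotone: for $A\succeq 0$ and $B$ merely entrywise nonnegative one can have $\mathrm{tr}(AB)<0$ (e.g.\ $A=\bigl(\begin{smallmatrix}1&-1\\-1&1\end{smallmatrix}\bigr)$, $B=\bigl(\begin{smallmatrix}0&1\\1&0\end{smallmatrix}\bigr)$), so the cross term in a telescoping argument has no sign. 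The paper sidesteps this by reading the extra hypothesis \emph{entrywise} rather than in the Loewner order: it assumes each scalar $V_j(s)=\mathrm{Var}(\hat{\gamma}_j\mid\mathcal{F})$ and $V_{jk}(s)=\mathrm{Cov}(\hat{\gamma}_j,\hat{\gamma}_k\mid\mathcal{F})$ is individually nonnegative and non-increasing, then expands $\mathrm{Var}(\hat{y}^{reg})$ fully into scalar products such as $V_j(s)\bigl(\psi(s)+\phi(\Sigma)+(\mu+\eta(s))^2\bigr)$ and $V_{0j}(s)(\mu+\eta(s))$, each a product of nonnegative non-increasing factors. Your matrix formulation is cleaner, but to make ``every summand is non-increasing'' valid you must replace the Loewner interpretation with this entrywise nonnegativity-plus-monotonicity reading (and implicitly $\mu\ge 0$, which the paper also uses tacitly). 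A related minor point: the entries of $\mathbb{E}[\mathbf{f}^*(\mathbf{f}^*)^T]$ involve $\mu+\eta(s)$ and $(\mu+\eta(s))^2$, not only $\eta(s)^2$.
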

\begin{proof}
A sketch of the proof is shown here. The full proof can be found in \cref{appendix:SNR}. 

\vspace{1mm}
For $\hat{y}_i^{mean}$, it is biased by $\eta(s)$ under \cref{assump:equibias}. Its variance can be decomposed by law of conditional variance as:
\begin{equation*}
\begin{aligned}
    \text{Var}(\hat{y}^{mean}) &= \text{Var}\left(\frac{1}{J}\sum_{j=1}^Jf_j(\mathbf{x}^*)\right)\\
    &=\frac{1}{J}\mathbb{E}\left[\text{Var}(f_j(\mathbf{x}^*)|\mathbf{x}^*)\right] +\frac{J(J-1)}{J^2}\mathbb{E}\left[\text{Cov}(f_j(\mathbf{x}^*),f_k(\mathbf{x}^*)|\mathbf{x}^*)\right] +\text{Var}\left(\mathbb{E}[f_j(\mathbf{x}^*)|\mathbf{x}^*]\right) 
\end{aligned}
\end{equation*}
where the first two terms are monotonically non-increasing functions of the SNR by \cref{assump:homoscedasticity}, and the third term is a function of $\Sigma$. Intuitively, the first two terms measures the uncertainty of fitting base learners, while the third term shows the variance from data itself.

\vspace{1mm}
For $\hat{y}^{reg}$, if we assume the oracle $\gamma_j$ as in \cref{assump:expressivity} are known, then $\hat{y}^{reg}$ is indeed unbiased. Variance can be computed as follows:
\begin{equation*}
\begin{aligned}
\text{Var}(\hat{y}^{reg}) =\text{Var}(g(\mathbf{x}^*))
\end{aligned}
\end{equation*}
which is a function of $\Sigma$. Extra uncertainty would be introduced if $\gamma_j$ are estimated by $\hat{\gamma}_j$, which we show in the full proof that it is still a monotonically non-increasing function of the SNR.
\end{proof}

\begin{remark}
    Under the oracle case, given we know that $\sum_{j=1}^J\gamma_j = 1$ from \cref{assump:expressivity}, we show:
\begin{equation*}
\sum_{j=1}^J\gamma_j^2 \geq \frac{1}{J}\;.
\end{equation*}
Note that 
\begin{equation*}
\begin{aligned}
    \text{Var}(\hat{y}^{reg}) =& \sum_{j=1}^J\gamma_j^2\mathbb{E}\left[\text{Var}(f_j(\mathbf{x}^*)|\mathbf{x}^*)\right] \\&+ 2\sum_{j<k}^J\gamma_j\gamma_k\mathbb{E}\left[\text{Cov}(f_j(\mathbf{x}^*),f_k(\mathbf{x}^*)|\mathbf{x}^*)\right]+\text{Var}\left(\mathbb{E}[f_j(\mathbf{x}^*)|\mathbf{x}^*]\right)\;,
\end{aligned}
\end{equation*}
where we generally expect $\text{Cov}(f_j(\mathbf{x}^*),f_k(\mathbf{x}^*)|\mathbf{x}^*) > 0$, $\forall j <k$. Hence, comparing with the formula of $\text{Var}(\hat{y}^{mean})$, it shows that the necessary conditions for $\hat{y}^{reg}$ to achieve variance reduction are (a) allowing negative coefficients $\gamma_j$; and (b) assigning $\gamma_j$, $\gamma_k$ in a way such that $2\sum_{j<k}^J\gamma_j\gamma_k\ll\frac{J(J-1)}{J^2}$. Meanwhile, bias reduction can be achieved by introducing a debiasing intercept term $\gamma_0$ to the regression.
\end{remark}

In summary, we have shown in this section that the performance of vanilla forest and post-selection forest are directly related to the signal-to-noise raio, but in different forms. This corresponds to the change of relative performance as SNR grows. Our result not only applies to tree models, but also to a general class of base learners.

\subsection{Bias-variance Tradeoff} \label{subsec:tradeoff}
In this section, we further investigate the properties of prediction mean squared error for both vanilla forest and post-selection forest via the bias-variance decomposition. We first look into a special case where the selection procedure is a simple linear regression without regularization. While this corresponds to the procedure of post-weighting, rather than post-selection,  the closed-form solution to \eqref{eq:est_reg} when $\lambda = 0$ makes the analysis more tractable.

We plug in the OLS estimator $\hat{\Gamma} = (F^TF)^{-1}F^Ty$ to $\Gamma = (\gamma_0, \gamma_1,...\gamma_J)$, where $F = (F_1,...F_N)^T=(1,f_1,...f_J) \in \mathbb{R}^{N \times (J+1)}$ contains all base learner predictions for $\mathcal{D}$. The following theorem brings more insights into the signal-to-noise ratio dependency by showing the exact form of mean squared error:

\begin{theorem} \label{thm:tradeoff}
    Under \cref{assump:equibias,assump:homoscedasticity,assump:expressivity}, let $\mathcal{F}= (f_1,...f_J)$ be the pre-trained base learners, $J+1<N$, assuming in addition that the outputs conditioning on base learners $(f_1(\mathbf{x}),...f_J(\mathbf{x}))|\mathcal{F} \sim \mathcal{N}(\mathbf{0}, \mathbf{W})$. Then the prediction mean squared error $\mathbb{E}\left[(y^*-\hat{y}^{mean})^2\right] = \eta(s)^2 + \frac{1}{J} \psi(s) +\frac{J(J-1)}{J^2}\omega(s) +\phi(\Sigma)$, while $\mathbb{E}\left[(y^*-\hat{y}^{reg})^2\right]=\frac{\sigma^2}{N-J-1}+ \phi(\Sigma)$.
\end{theorem}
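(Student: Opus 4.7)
The plan is to establish each of the two identities via a direct bias-plus-variance decomposition of the squared prediction error, conditioning first on the new covariate $\mathbf{x}^*$ so that the randomness carried by the pre-trained learners $\mathcal{F}$ and that carried by $\mathbf{x}^*$ (and $\epsilon$) can be handled separately. In both cases, the identity $g(\mathbf{x}_i)=\gamma_0+\sum_j\gamma_j f_j(\mathbf{x}_i)$ provided by \cref{assump:expressivity} ties the true conditional mean to a linear combination of the base learners, which is what makes the two predictors comparable in closed form.

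For $\hat y^{mean}$, essentially the calculation sketched in \cref{prop:SNR} applies verbatim. \cref{assump:equibias} gives $\mathbb{E}[\hat y^{mean}\mid \mathbf{x}^*]=g(\mathbf{x}^*)+\eta(s)$, which produces both the squared-bias term $\eta(s)^2$ and, through the outer term of the law of total variance, the $\mathrm{Var}(g(\mathbf{x}^*))=\phi(\Sigma)$ summand. \cref{assump:homoscedasticity} expands the inner conditional variance of the $J$-term average into $\tfrac{1}{J}\tau^2(\mathbf{x}^*,s)+\tfrac{J-1}{J}\rho(\mathbf{x}^*,s)$, whose $\mathbf{x}^*$-expectations are $\tfrac{1}{J}\psi(s)+\tfrac{J(J-1)}{J^2}\omega(s)$. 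Collecting everything yields the first formula.

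For $\hat y^{reg}$, the critical observation is that \cref{assump:expressivity} makes the OLS regression exactly well-specified: $y_i=F_i^T\Gamma+e_i$ with $\mathrm{Var}(e_i\mid F_i)=\sigma^2$. Conditionally on the design matrix $F$, the usual OLS moment identities $\mathbb{E}[\hat\Gamma\mid F]=\Gamma$ and $\mathrm{Cov}(\hat\Gamma\mid F)=\sigma^2(F^TF)^{-1}$ then apply, so $\hat y^{reg}$ is conditionally unbiased for $g(\mathbf{x}^*)$ and its prediction-error variance decomposes, via the law of total variance, as $\sigma^2\,\mathbb{E}\bigl[F(\mathbf{x}^*)^T(F^TF)^{-1}F(\mathbf{x}^*)\bigr]+\phi(\Sigma)$, with the second summand again coming from the variability of $g(\mathbf{x}^*)$.

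The main obstacle is reducing the expected quadratic form to the clean coefficient $1/(N-J-1)$ claimed in the statement. The plan is to exploit the normality hypothesis $(f_1(\mathbf{x}),\ldots,f_J(\mathbf{x}))\mid\mathcal{F}\sim\mathcal{N}(\mathbf{0},W)$ by first absorbing the intercept via centering --- reparameterising $F$ as $(\mathbf{1}\mid U-\mathbf{1}\bar{u}^T)$ so that the Gram matrix becomes block-diagonal with a scalar block $N$ and a $J\times J$ block $S=\sum_i(u_i-\bar{u})(u_i-\bar{u})^T$ that is $\mathrm{Wishart}(W,N-1)$ distributed --- and then using the independence of $S$ from both $\bar{u}$ and the test point $u^*$ to apply the inverse-Wishart mean identity $\mathbb{E}[S^{-1}]=(N-J-2)^{-1}W^{-1}$ together with $\mathrm{Cov}(u^*-\bar{u})=\tfrac{N+1}{N}W$. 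The quadratic form then collapses to a scalar depending only on $N$ and $J$; carefully matching this scalar to $1/(N-J-1)$ --- tracking the Wishart degrees of freedom through the centering and cancelling the intercept contribution --- is the delicate step, while the rest of the argument is routine bookkeeping once the two bias-variance expansions are set up.
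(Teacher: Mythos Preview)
Your plan is essentially the paper's own argument: both routes do the bias--variance split by conditioning on $\mathbf{x}^*$ (and, for $\hat y^{reg}$, on the training predictions), invoke OLS unbiasedness via the well-specification guaranteed by \cref{assump:expressivity}, reduce the variance of $\hat y^{reg}$ to $\sigma^2\,\mathbb{E}\bigl[F^*(F^TF)^{-1}F^{*T}\bigr]+\phi(\Sigma)$, and then evaluate the expected quadratic form using the trace trick together with Wishart/inverse-Wishart moments under the Gaussian design assumption. The $\hat y^{mean}$ portion is identical to what the paper does.

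The one substantive difference is how the intercept column is handled in the Wishart step. The paper does not center: it simply asserts $\mathbb{E}[F^{*T}F^*\mid\mathcal{F}]=\mathbf{W}$ and $\mathbb{E}[(F^TF)^{-1}\mid\mathcal{F}]=\mathbf{W}^{-1}/(N-J-1)$ and takes the trace, in effect treating the full $(J{+}1)$-column design as mean-zero Gaussian and suppressing the constant column. Your centering reparametrisation is the more honest way to isolate the intercept block, but if you carry it through exactly you obtain $\tfrac{1}{N}+\tfrac{(N+1)J}{N(N-J-2)}$, which agrees with $1/(N-J-1)$ only to leading order in $N$. So the ``delicate step'' you flag cannot in fact be closed to the stated constant; the paper's own derivation glosses over precisely this discrepancy rather than resolving it, and the clean $1/(N-J-1)$ should be read as the outcome of the idealized pure-Gaussian Wishart calculation (intercept absorbed) rather than a fully rigorous accounting.
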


\begin{proof}
As before, a sketch of proof is shown here. We show the case where $J+1<N$, while the full proof in \cref{appendix:tradeoff} also discusses the case where $J>N$ in detail.

\vspace{1mm}
For $\hat{y}^{mean}$, it has the exact same properties as in \cref{prop:SNR}: 
\begin{equation*}
    \begin{aligned}
    \mathbb{E}[\hat{y}^{mean}] &= \mu + \eta(s)\\
    \text{Var}(\hat{y}^{mean}) &=\frac{1}{J}\psi(s) +\frac{J(J-1)}{J^2}\omega(s)+ \phi(\Sigma)
    \end{aligned}
\end{equation*}

For $\hat{y}^{reg}$, we analyze its bias and variance by first conditioning on all the base learner predictions $\mathcal{P} = \left\{\left(f_1(\mathbf{x}_i), ...f_J(\mathbf{x}_i)\right)\right\}_{i=1}^N$ on $\mathcal{D}$. Denote the base learner prediction matrix for $\{\mathbf{x}_i\}_{i=1}^N$ by $F = \left(1, f_1(\mathbf{x}), ...f_J(\mathbf{x})\right)\in \mathbb{R}^{N\times(J+1)}$, and prediction vector for $\mathbf{x}^*$  by $F^* = \left(1, f_1(\mathbf{x}^*), ...f_J(\mathbf{x}^*)\right)\in \mathbb{R}^{J+1}$:
\begin{equation*}
    \begin{aligned}
    \mathbb{E}[\hat{y}^{reg}] &= \mathbb{E}\left[\mathbb{E}[F^*(F^TF)^{-1}F^Ty|\mathcal{P}, \mathbf{x}^*]\right]
    =\mathbb{E}[F^*\Gamma] = \mu\\
    \text{Var}(\hat{y}^{reg}) &=\mathbb{E}\left[\text{Var}(F^*(F^TF)^{-1}F^Ty|\mathcal{P},\mathbf{x}^*)\right] + \text{Var}\left(\mathbb{E}[F^*(F^TF)^{-1}F^Ty|\mathcal{P},\mathbf{x}^*]\right) \\
    &=\mathbb{E}\left[\text{Var}(F^*(F^TF)^{-1}F^T\epsilon|\mathcal{P},\mathbf{x}^*)\right] + \text{Var}\left(\gamma_0 + \sum_{j=1}^J \gamma_j f_j(\mathbf{x}^*)\right)\\
    &=\sigma^2 \mathbb{E}\left[F^*(F^TF)^{-1}F^{*T}\right]+ \text{Var}(g(\mathbf{x}^*))\\
    &=\frac{\sigma^2}{N-J-1}+ \phi(\Sigma)
    \end{aligned}
\end{equation*}
where for bias calculation, the second equality holds by independence of $\mathbf{x}^*$ and \cref{assump:expressivity}, the third equality holds by \cref{assump:equibias} and \cref{assump:expressivity}; for variance calculation, the second and third equality hold by \cref{assump:expressivity}, the last equality holds by applying the trace trick and conditioning on the pre-trained base learners $\mathcal{F}$.
\end{proof}

\begin{corollary}
    If we further assume that $\eta(s)=0$ in \cref{assump:equibias}, i.e. all base learners are conditionally unbiased, and keep the signal level $\phi(\Sigma)$ fixed, then the comparison of mean squared error depends on the number of base learners $J$, sample size $N$, signal-to-noise ratio $s$, variance and covariance of base learners $\psi(s)$, $\omega(s)$.
\end{corollary}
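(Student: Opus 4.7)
The plan is to deduce the corollary directly from \cref{thm:tradeoff} by specializing to $\eta(s) = 0$ and then isolating the terms that govern the sign of the MSE difference. The proof is almost entirely bookkeeping, so I do not expect any substantive technical obstacle.

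First, I would substitute $\eta(s) = 0$ into the two MSE expressions provided by \cref{thm:tradeoff}. The squared-bias term of the mean aggregator vanishes, leaving
\begin{align*}
\mathbb{E}\!\left[(y^* - \hat{y}^{mean})^2\right] &= \frac{1}{J}\psi(s) + \frac{J-1}{J}\omega(s) + \phi(\Sigma), \\
\mathbb{E}\!\left[(y^* - \hat{y}^{reg})^2\right] &= \frac{\sigma^2}{N-J-1} + \phi(\Sigma).
\end{align*}

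Next, because $\phi(\Sigma)$ is held fixed by hypothesis and appears as a common additive constant, it cancels from any MSE comparison. The relative ranking of the two aggregators is therefore controlled by the sign of
\[
\Delta(J, N, s) \;:=\; \frac{1}{J}\psi(s) + \frac{J-1}{J}\omega(s) \;-\; \frac{\sigma^2}{N-J-1}.
\]
Invoking the SNR definition $s = \phi(\Sigma)/\sigma^2$ to rewrite $\sigma^2 = \phi(\Sigma)/s$ makes it explicit that, once $\phi(\Sigma)$ is fixed, $\Delta$ is a function of exactly the four scalar ingredients $J$, $N$, $s$, and the variance/covariance profiles $\psi(s),\omega(s)$, which is precisely the content of the corollary.

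The only care required is to verify that the ambient hypotheses of \cref{thm:tradeoff}---in particular $J+1 < N$ so that $\sigma^2/(N-J-1)$ remains finite and well-defined---are preserved under the specialization, and to note that $\eta \equiv 0$ is admissible under \cref{assump:equibias} since the zero function is both monotonically non-increasing and non-negative. Beyond this there is nothing further to prove; the mean aggregator is preferred iff $\tfrac{1}{J}\psi(s) + \tfrac{J-1}{J}\omega(s) < \sigma^2/(N-J-1)$, and this inequality transparently depends on each of the quantities named in the corollary statement. The main interpretive payoff---worth flagging after the formal derivation---is that even at a fixed SNR, growing $J$ or shrinking $N$ can flip the inequality, which foreshadows the need for the adaptive $\theta$ mechanism of the Lassoed forest.
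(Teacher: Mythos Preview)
Your proposal is correct and follows the same approach the paper takes: the corollary is an immediate specialization of \cref{thm:tradeoff} under $\eta(s)=0$, and the paper offers no separate proof beyond stating it. Your derivation of $\Delta(J,N,s)$ and the substitution $\sigma^2=\phi(\Sigma)/s$ make the dependence explicit in a way the paper leaves implicit, but the underlying reasoning is identical.
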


\begin{remark} \label{rm:tradeoff} Comparing the variance and bias terms for the case $\lambda = 0$, i.e. post-weighting by linear regression, we can draw the following conclusions:
\begin{enumerate}
    \item $\hat{y}^{mean}$ does not reduce bias from the base learners, while $\hat{y}^{reg}$ can remove any fixed biases that are allowed to be dependent on the sample size and SNR.
    \item While both $\hat{y}^{mean}$ and $\hat{y}^{reg}$ share the same variance $\phi(\Sigma)$ from data, variance from fitting base learners scales differently with SNR. For $\hat{y}^{mean}$, $\psi(s)$ and  $\omega(s)$ both depend on the particular choice of base learners. For $\hat{y}^{reg}$, $\frac{\phi(\Sigma)}{s(N-J-1)}$ has fixed scale $O\left(\frac{1}{s}\right)$. This suggests that post-weighting benefits from utilizing the correlation between individual trees to stabilize model performance. We give some simple examples of $\psi(s),\omega(s)$ under specific modeling assumptions in \cref{appendix:variance}. For instance, if the base learners are linear models and correctly specified, then $\psi(s),\omega(s) = O(\frac{1}{s})$, in which case the variance comparison only depends on $J$ and $N$, i.e. number of base learners versus sample size.
\end{enumerate}
\end{remark}

Now we consider the case when $\lambda > 0$ and provide the intuition of MSE reduction only. Compared with linear regression, especially in the case where $J\geq N$ as discussed in \cref{appendix:tradeoff}, applying Lasso re-introduces bias, but reduces variance. On one hand, variance reduction is still a non-increasing function of SNR with scale $O(\frac{1}{s^{(q+p)}})$, assuming $\psi(s),\omega(s) = O(\frac{1}{s^p})$, $p$, $q\geq0$. The intuition behind the double reduction is that if the variance from training base learners scale as $O(\frac{1}{s^p})$, then the $l_1$-regularization could further reduce variance by scale $O(\frac{1}{s^q})$. On the other hand, previous work by \citeasnoun{4839045}, \citeasnoun{10.1214/07-AOS582}, etc. has proved the support recovery for Lasso under high-dimensional settings. Hence, we conjecture that compared with the vanilla forest, performing post-selection with Lasso could suffer from variance increase, depending on the relative magnitude of uncertainties from fitting base learners, but bring benefits of bias reduction.

\subsection{Improvement via Adaptive Objective with Offset} \label{subsec:offset}
In this section, we show that our proposal of Lassoed forest is a strict improvement to both vanilla forest and post-selection forest. In other words, despite the obvious fact that vanilla forest and post-selection forest are two special cases of the adaptive objective \eqref{alg:LARF} with $\theta = 0$ and $\theta = 1$, we show that a choice of $\theta \in (0,1)$ could give strictly better prediction mean squared error under certain circumstances. The results are readily generalizable to any base learners satisfying \cref{assump:equibias,assump:homoscedasticity,assump:expressivity}. Again, we analyze the case when $\lambda = 0$ and provide only intuition for the case when $\lambda > 0$.

\vspace{3mm}
First, when $\lambda = 0$, the solution to \eqref{eq:LARF} is:

\begin{equation*}
    \hat{y}^{ada}=(1-\theta)\frac{1}{J}\sum_{j=1}^J \widehat{T}^*_j+\theta\sum_{j=0}^J  \widehat{T}^*_j\left((\widehat{\mathbf{T}}^T\widehat{\mathbf{T}})^{-1}\widehat{\mathbf{T}}^Ty\right)_j
\end{equation*}
where $\widehat{\mathbf{T}}= (\widehat{\mathbf{T}}_1,...\widehat{\mathbf{T}}_N)^T=(1, \widehat{T}_{\cdot 1},...\widehat{T}_{\cdot J}) \in \mathbb{R}^{N/2 \times (J+1)}$ are tree predictions by cross-fitting, $\widehat{T}^*= (1,\widehat{T}^*_1,...\widehat{T}^*_J)$ are tree predictions for a new given point. The theorem below provides a direct comparison between all three methods:

\begin{theorem} \label{thm:adaptive}
    Under \cref{assump:equibias,assump:homoscedasticity,assump:expressivity}, assuming in addition that $(f_1(\mathbf{x}),...f_J(\mathbf{x}))|\mathcal{F} \sim \mathcal{N}(\mathbf{0}, \mathbf{W})$, where $\mathcal{F}= (f_1,...f_J)$ are pre-trained base learners, $J+1<N$. Then $\mathbb{E}\left[(y^*-\hat{y}^{ada})^2\right] = (1-\theta)^2\eta(s)^2+(1-\theta)^2\left(\frac{1}{J}\psi(s) +\frac{J(J-1)}{J^2}\omega(s)\right)+\theta^2\frac{\sigma^2}{N-J-1}+ \phi(\Sigma)$.
\end{theorem}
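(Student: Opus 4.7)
The plan is to exploit the linearity of $\hat{y}^{ada}=(1-\theta)\hat{y}^{mean}+\theta\hat{y}^{reg}$ and reduce the MSE calculation to the two MSEs already supplied by \cref{thm:tradeoff} plus a single cross term. Writing
\begin{equation*}
  y^*-\hat{y}^{ada}=(1-\theta)(y^*-\hat{y}^{mean})+\theta(y^*-\hat{y}^{reg}),
\end{equation*}
squaring and taking expectations gives
\begin{equation*}
  \mathbb{E}[(y^*-\hat{y}^{ada})^2]=(1-\theta)^2\mathbb{E}[(y^*-\hat{y}^{mean})^2]+\theta^2\mathbb{E}[(y^*-\hat{y}^{reg})^2]+2\theta(1-\theta)\,C,
\end{equation*}
where $C:=\mathbb{E}[(y^*-\hat{y}^{mean})(y^*-\hat{y}^{reg})]$. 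The first two summands are supplied directly by \cref{thm:tradeoff}, so the entire burden is to identify $C$.

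The heart of the argument will be the claim $C=\phi(\Sigma)$, i.e.\ the shared irreducible variance that must appear only once, not twice or thrice, in the final formula. I would decompose $y^*-\hat{y}^{mean}=\epsilon^*+(g(\mathbf{x}^*)-\hat{y}^{mean})$ and analogously for $\hat{y}^{reg}$. Because $\epsilon^*$ is independent of $\mathbf{x}^*$, the training data, and the pre-trained base learners $\mathcal{F}$, the two cross products carrying a single factor of $\epsilon^*$ vanish in expectation, leaving
\begin{equation*}
  C=\sigma^2+\mathbb{E}\bigl[(g(\mathbf{x}^*)-\hat{y}^{mean})(g(\mathbf{x}^*)-\hat{y}^{reg})\bigr].
\end{equation*}
To kill the remainder I would condition on $(\mathbf{x}^*,\mathcal{P},\mathcal{F})$: then $\hat{y}^{mean}$ is a deterministic function of $(\mathbf{x}^*,\mathcal{F})$ and factors out, while \cref{assump:expressivity} gives $\mathbb{E}[y\mid\mathcal{P}]=F\Gamma$, which forces $\mathbb{E}[\hat{y}^{reg}\mid\mathbf{x}^*,\mathcal{P},\mathcal{F}]=F^*(F^TF)^{-1}F^TF\Gamma=F^*\Gamma=g(\mathbf{x}^*)$. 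Hence $g(\mathbf{x}^*)-\hat{y}^{reg}$ has conditional mean zero, the remainder vanishes, and $C$ reduces to the irreducible noise that the paper denotes $\phi(\Sigma)$.

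Substituting back and using the identity $(1-\theta)^2+\theta^2+2\theta(1-\theta)=1$ collapses the three $\phi(\Sigma)$ contributions into a single $\phi(\Sigma)$, producing exactly the stated expression. The step I expect to be the most delicate is justifying that $\hat{y}^{reg}$ is conditionally centered on $g(\mathbf{x}^*)$ itself rather than on $g(\mathbf{x}^*)+\eta(s)$: this is precisely where the expressivity assumption, together with its companion identities $\gamma_0=-\eta(s)$ and $\sum_j\gamma_j=1$, does the real work, absorbing the common base-learner bias into the regression intercept so that the OLS average undoes what the straight mean $\hat{y}^{mean}$ cannot. Once this identity is in place, all remaining manipulations are routine bookkeeping.
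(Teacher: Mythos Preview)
Your convex-combination strategy is legitimate and genuinely different from the paper's route: the paper computes the bias and variance of $\hat y^{ada}$ directly (conditioning on $\mathbf{x}^*$, applying the law of total variance, and showing that the cross-covariance between $\overline{T^*}$ and the noise-propagation term $\widehat{T}^*(\widehat{\mathbf{T}}^T\widehat{\mathbf{T}})^{-1}\widehat{\mathbf{T}}^T\epsilon$ vanishes), then reports $\text{Bias}^2+\text{Var}$. Your way is arguably slicker because it recycles \cref{thm:tradeoff} wholesale and reduces everything to one cross term.

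However, your identification of $C$ does not go through. Your own computation gives $C=\sigma^2$: after splitting off $\epsilon^*$ you obtain $C=\mathbb{E}[(\epsilon^*)^2]+\mathbb{E}[(g(\mathbf{x}^*)-\hat y^{mean})(g(\mathbf{x}^*)-\hat y^{reg})]$, and your conditioning argument correctly kills the second summand. But $\sigma^2$ is the \emph{noise} variance, whereas $\phi(\Sigma)=\mathrm{Var}(g(\mathbf{x}))$ is the \emph{signal} variance; the paper never equates the two. If you plug $C=\sigma^2$ into your expansion together with the formulas stated in \cref{thm:tradeoff}, the three contributions do \emph{not} collapse: you get $[(1-\theta)^2+\theta^2]\phi(\Sigma)+2\theta(1-\theta)\sigma^2$, which equals $\phi(\Sigma)$ only when $s=1$. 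So the sentence ``$C$ reduces to the irreducible noise that the paper denotes $\phi(\Sigma)$'' is a misidentification, not a harmless relabeling.

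The underlying issue is that what the paper actually computes in \cref{thm:tradeoff,thm:adaptive} is $\text{Bias}(\hat y)^2+\text{Var}(\hat y)=\mathbb{E}[(\hat y-\mu)^2]$; the $\phi(\Sigma)$ term comes from $\text{Var}(\mathbb{E}[\hat y\mid\mathbf{x}^*])=\text{Var}(g(\mathbf{x}^*))$, not from irreducible noise. To salvage your decomposition, run it with the target $\mu$ in place of $y^*$. Then $C=\mathbb{E}[(\mu-\hat y^{mean})(\mu-\hat y^{reg})]$; your same conditioning gives $\mathbb{E}[\mu-\hat y^{reg}\mid\mathbf{x}^*,\mathcal{P},\mathcal{F}]=\mu-g(\mathbf{x}^*)$, whence $C=\mathbb{E}[(\mu-\hat y^{mean})(\mu-g(\mathbf{x}^*))]=\mathbb{E}[(\mu-g(\mathbf{x}^*))^2]-\eta(s)\,\mathbb{E}[\mu-g(\mathbf{x}^*)]=\phi(\Sigma)$, and the collapse to a single $\phi(\Sigma)$ now goes through exactly as you wanted.
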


\begin{proof}
The full derivation can be found in \cref{appendix:adaptive}.

\vspace{1mm}
By linearity of expectation, 
    \begin{equation*}
    \mathbb{E}[\hat{y}^{ada}] = (1-\theta) \mathbb{E}[\hat{y}^{mean}] + \theta\mathbb{E}[\hat{y}^{reg}]=\mu+(1-\theta)\eta(s)
    \end{equation*}

By law of conditional variance,

    \begin{equation*}
    \begin{aligned}
    \text{Var}(\hat{y}^{ada}) =&\mathbb{E}\left[\text{Var}\left((1-\theta) \overline{T^*} +\theta\widehat{T}^*(\widehat{\mathbf{T}}^T\widehat{\mathbf{T}})^{-1}\widehat{\mathbf{T}}^Ty\Big|\mathbf{x}^*\right)\right]\\ &+ \text{Var}\left(\mathbb{E}\left[(1-\theta) \overline{T^*} +\theta\widehat{T}^*(\widehat{\mathbf{T}}^T\widehat{\mathbf{T}})^{-1}\widehat{\mathbf{T}}^Ty\Big|\mathbf{x}^*\right]\right) \\
    =&\mathbb{E}\left[\text{Var}\left((1-\theta) \overline{T^*} + \theta\widehat{T}^*(\widehat{\mathbf{T}}^T\widehat{\mathbf{T}})^{-1}\widehat{\mathbf{T}}^T\epsilon\Big|\mathbf{x}^*\right)\right]+ \text{Var}\left(g(\mathbf{x}^*)\right)\\
    =&(1-\theta)^2\left(\frac{1}{J}\psi(s) +\frac{J(J-1)}{J^2}\omega(s)\right)+\theta^2\frac{\sigma^2}{N-J-1}+ \phi(\Sigma)
    \end{aligned}
\end{equation*}
\end{proof}

\begin{remark} \label{rm:adaptive}
    Comparing with results from \cref{thm:tradeoff}, we see that:
    \begin{enumerate}
        \item The adaptive random forest prediction $\hat{y}^{ada}$ without regularization still carries the same uncertainty $\phi(\Sigma)$ from the data.
        \item However, $\hat{y}^{ada}$ imposes a shrinkage on the bias term $\eta(s)$ compared with the vanilla forest. Meanwhile, it also balances the variance from fitting base learners between vanilla forest and post-selection forest. Most importantly, the balancing weights lie in the interior of the convex hull of $\left\{\eta(s)^2 +\frac{1}{J}\psi(s) +\frac{J(J-1)}{J^2}\omega(s), \frac{\sigma^2}{N-J-1}\right\}$, as 
        \begin{equation*}
            (1-\theta)^2 +\theta^2 <1, \quad\text{when }\theta \in (0,1).
        \end{equation*}
        This implies that there always exists $\theta \in (0,1)$, such that
        \begin{equation*}
            \mathbb{E}\left[(y^*-\hat{y}^{ada})^2\right]<\min\left\{\eta(s)^2 + \frac{1}{J} \psi(s) +\frac{J(J-1)}{J^2}\omega(s), \frac{\sigma^2}{N-J-1}\right\}+\phi(\Sigma).
        \end{equation*}
        For example, suppose $\eta(s)^2 + \frac{1}{J} \psi(s) +\frac{J(J-1)}{J^2}\omega(s)= \frac{\sigma^2}{N-J-1}$, then any $\theta \in (0,1)$ satisfies the strict inequality. More generally, suppose $\frac{\eta(s)^2 + \psi(s)/J +J(J-1)\omega(s)/J^2}{\sigma^2/(N-J-1)}= c\leq 1$, then one can always pick $\theta < \min\left\{\frac{2c}{c+1} ,1\right\}$ so that the inequality holds.
    \end{enumerate}
\end{remark}

\vspace{3mm}
Now we discuss the case when $\lambda >0$. We have previously conjectured that applying Lasso increases bias, and the variance comparison with vanilla forest depends on the relative magnitude of uncertainties from fitting base learners. However, we argue that the balancing effect from applying adaptive weights always guarantees an admissible mean squared error, i.e. at least as good as both vanilla and post-selection forest. This is due to the quadratic form of weights on a 1-dimensional simplex $\{\theta^2, (1-\theta)^2\}$, and is not impacted by the absolute magnitude of mass located at each vertex.

In practice, the Lassoed forest almost certainly have the global optimal performance among all three methods, though tuning the weights $\{\theta, 1-\theta\}$ based on out-of-bag and cross-validation error estimates could bring some costs.

\section{Simulation studies} \label{sec:simulation}
In this section, we provide multiple simulation experiments with different data generating processes. We investigate whether the signal-to-noise ratio dependency for random forests is universal, and whether applying the Lassoed forests is beneficial. We use the R package \texttt{ranger} \cite{wright2017ranger} to train random forests and run analyses.

\vspace{1mm}
For each experiment, we do the following:
\vspace{-1mm}
\begin{enumerate}
    \item compare the model performance under various SNRs;
    \item test the accuracy of out-of-bag and cross-validation error estimates that are used to pick adaptive weights $\{\theta, 1-\theta\}$;
    \item decompose the mean squared error to verify theorems and conjectures numerically;
    \item analyze the variable importance by measures based on split counts.
\end{enumerate}

\subsection{Polynomial Functions as Input}
First, we consider the case where the data generating function is polynomial:
\begin{equation*}
    y = \sum_{j=1}^p \alpha_jx_j + \sum_{1\leq j<k\leq p} \beta_{j,k} x_jx_k +\epsilon
\end{equation*}\vspace{-1mm}where $x_j\overset{i.i.d.}{\sim} N (0, 1)$, $\epsilon \overset{i.i.d.}{\sim} N (0, \sigma^2)$. Both $\boldsymbol{\alpha}$ and $\boldsymbol{\beta}$ are constructed to be sparse vectors as follows, with $c=0.1$ and $\pi=0.5$: 
\begin{equation*}
    \begin{aligned}
    \alpha_j &= a \cdot \text{Unif}(0,c) + (1-a) \cdot 0, \quad a\overset{i.i.d.}{\sim} \text{Bern}(\pi)\\
    \beta_{j,k} &=b \cdot \text{Unif}(0,c)+ (1-b) \cdot 0, \quad b\overset{i.i.d.}{\sim} \text{Bern}(\pi)
    \end{aligned}
\end{equation*}

We set the sample size $n=400$, number of trees $J=200$, number of features $p=50$. \autoref{fig:polynomial_MSE} shows the comparison of prediction mean squared error among vanilla forest, post-selection forest, and Lassoed forest under different signal-to-noise ratios. We scale and center the response $y$ before fitting the model, $\theta$ is chosen from $\{0,0.25,0.5,0.75,1\}$.
\begin{figure}[H]
    \centering
    \includegraphics[width=1\linewidth]{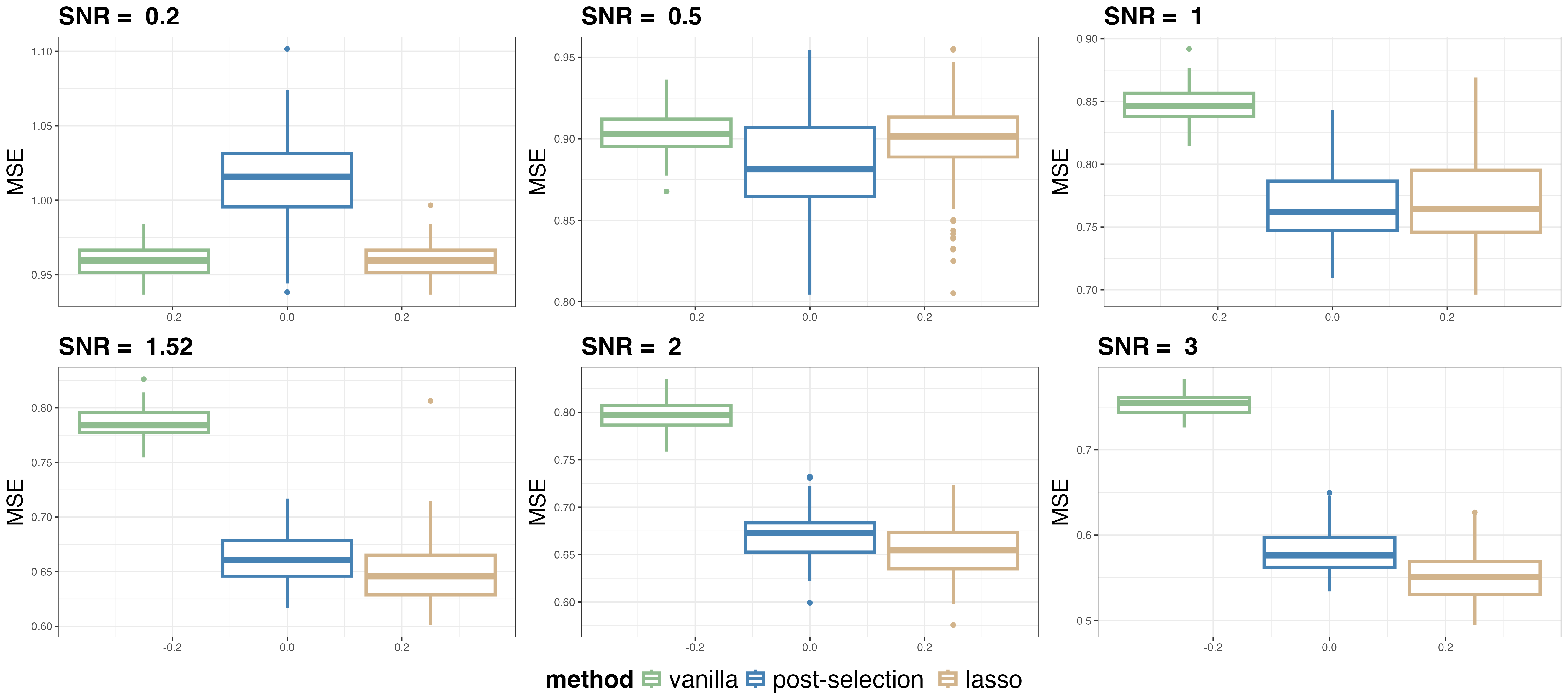}
    \caption{\em{Mean Squared Error for Polynomial Generating Functions}}
    \label{fig:polynomial_MSE}
\end{figure}

We can observe the followings:
\begin{enumerate}
    \item As the signal-to-noise ratio increases, the post-selection forest starts to outperform the vanilla forest. The Lassoed forest can almost always correctly identify which method is preferred in a particular SNR setting by using error estimates, and picks the adaptive weights balancing the vanilla and post-selection forest.
    \item As we showed in \cref{thm:adaptive}, the Lassoed forest can theoretically achieve lower mean squared error than both the vanilla and post-selection forest. This is verified in the numerical experiments when SNR is in medium to high range.
    \item The Lassoed forest does not give the best performance in a few cases, especially when vanilla and post-selection forest have comparable performances. This is because the out-of-bag and cross-validation error estimates are roughly unbiased, but fairly noisy. In other words, the error estimates based on a held-out set can be seen as the true test error plus some random noise. Hence, the Lassoed forest is sometimes unable to pick the optimal weights.
\end{enumerate}

\autoref{fig:polynomial_error} further shows the accuracy of these error estimates. Here, we only display the out-of-bag estimates for vanilla forest, and the cross-validation estimates for post-selection forest, which corresponds to $\theta = 0$ and $\theta = 1$. We also plot the differences in true test errors against estimated errors. An ideal case would be most points falling in the $1^{st}$ and $3^{rd}$ quadrants, i.e. sharing the same sign, even if the estimates themselves are noisy. 

\begin{figure}[H]
    \centering
    \includegraphics[width=1\linewidth]{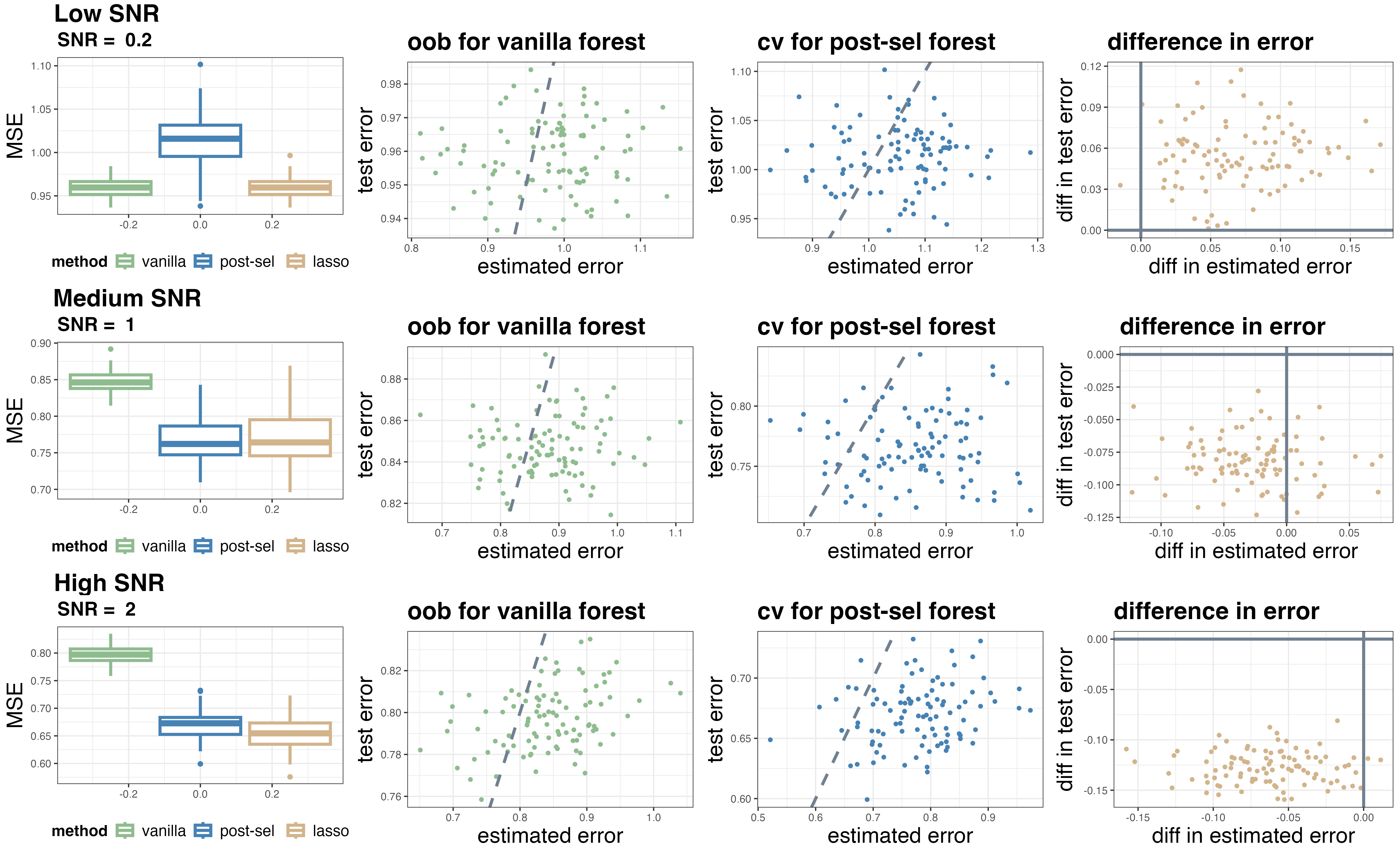}
    \caption{\em{Out-of-bag and Cross-validation Error Estimates for Polynomial Generating Functions:}
    \small Each row represents a signal-to-noise ratio setting. The first panel shows comparison of prediction mean squared error, second and third panels show the true test error against out-of-bag and cross-validation error estimates, last panel displays differences in true test errors against differences in estimated errors.}
    \label{fig:polynomial_error}
\end{figure}

We can observe that the error estimates are roughly unbiased when SNR is low, while cross-validation tend to overestimate when SNR is high. Fortunately, since the post-selection forest is significantly better when SNR is high, the error estimates still correctly predict the sign of difference in errors. However, this causes an issue when the SNR is not high enough, as we see that the overestimation of cross-validation error causes the adaptive method to act as though the vanilla forest is preferred.

\vspace{3mm}
Next, we decompose the prediction mean squared error into bias and variance terms, and provide a deeper understanding of how vanilla and post-selection forest perform differently under different SNRs. \autoref{fig:polynomial_biasvar} shows the decomposition and tracks performance of the Lassoed forest alongside the two benchmarks.
\begin{figure}[H]
    \centering
    \includegraphics[width=1\linewidth]{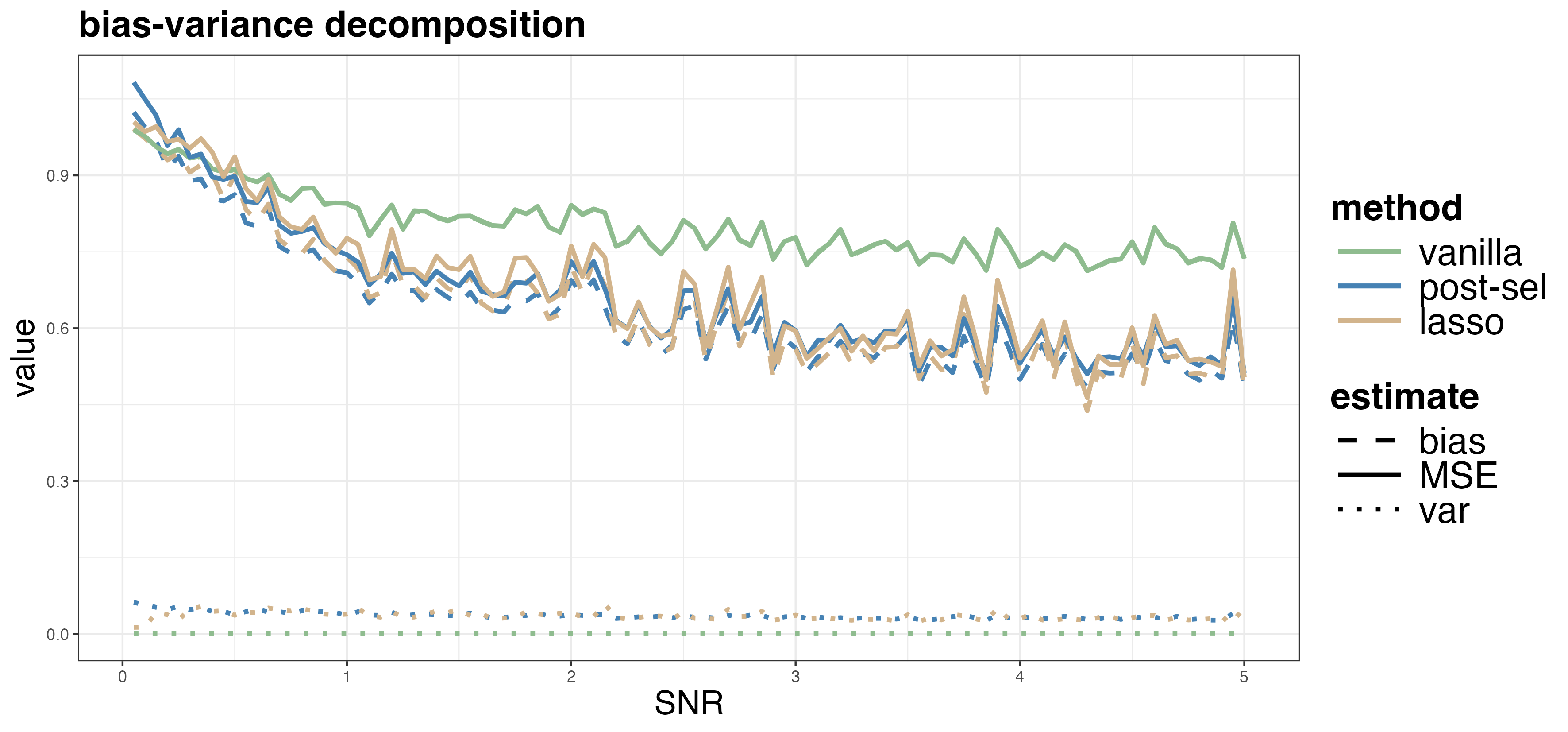}
    \caption{\em{Bias-variance Decomposition for Polynomial Generating Functions}}
    \label{fig:polynomial_biasvar}
\end{figure}

The result justifies our assumptions, as well as verifies our theorems and conjectures:
\begin{enumerate}
    \item We assume the bias of base learners from a given function class is a monotonically non-increasing function of the SNR in \cref{assump:equibias}, for a fixed sample size. This is justified by the decreasing bias of the vanilla forest, as we showed in \cref{rm:tradeoff} that vanilla forest does not reduce bias and therefore always reflects the bias $\eta(s)$ of base learners.
    \item We also assume the same monotone relation between variance of base learner and the SNR in \cref{assump:homoscedasticity}. This is justified by the decreasing variance of post-selection forest, particularly when SNR is small. 
    \item \cref{prop:SNR} is verified, as we see not only the mean squared error decreases monotonically when SNR grows, but the  vanilla and post-selection forest also have different rates of decreasing. Meanwhile, the numerical result illustrates the signal-to-noise ratio dependency.
    \item We also notice that the variance of both vanilla and post-selection forest is on a much smaller scale compared with the bias, so is the change in variance. The empirical evidence also shows that the gap in variance is shrinking, but subtly. This verifies \cref{thm:tradeoff} that both variances have a fixed component $\phi(\Sigma)$ from the data, and the uncertainties from fitting base learners scale differently with SNR.
    \item Lastly, the performance of Lassoed forest verifies \cref{thm:adaptive}, where we show that the reduction in mean squared error is theoretically guaranteed, and does not depend on the absolute magnitude of either bias or variance from vanilla and post-selection forest.
\end{enumerate}

Finally, as an important advantage of the tree-based methods is the interpretability, we also want to compare the ability of all three methods to pick the correct set of predictors when the signal is sparse. \autoref{fig:polynomial_variable} shows the comparison under different SNRs, where only the first 5 predictors have non-zero coefficients that are identical and fixed in data generation process. We consider the counts of variable used at each node split as proxies of variable importance. Formally, the measure of variable importance is defined as follows, whose values are normalized and sum up to 1 for each method:
\begin{equation} \label{eq:variableimp}
    \kappa_s =\theta\sum_{j=1}^J\frac{\hat{\gamma}_j C_{s,j}^{L}}{\sum_{s=1}^p\sum_{j=1}^
    J\hat{\gamma}_j C_{s,j}^{L}} +(1-\theta)\sum_{j=1}^J\frac{C_{s,j}^{V}}{\sum_{s=1}^p\sum_{j=
    1}^JC_{s,j}^{V}}, \quad s=1,...,p
\end{equation}
where $C_{s,j}^{L}$ is the count of how many times the $s^{th}$ variable is used as a splitting node in the $j^{th}$ tree for Lassoed forest, same to $C_{s,j}^{V}$ for vanilla forest. The above definition yields the following implications for different methods:
\vspace{-1mm}
\begin{enumerate}
    \item For vanilla forest, $\kappa_s$ is the count of $s^{th}$ variable as splitting node, divided by the total number of splits;
    \item For post-selection forest, $\kappa_s$ is the count of $s^{th}$ variable as splitting node, weighted by the Lasso coefficient for each tree, and divided by the sum of weighted counts;
    \item For Lassoed forest, $\kappa_s$ is the adaptive weighted average of the above two.
\end{enumerate}
\vspace{-2mm}
\begin{figure}[H]
    \centering
    \includegraphics[width=0.9\linewidth]{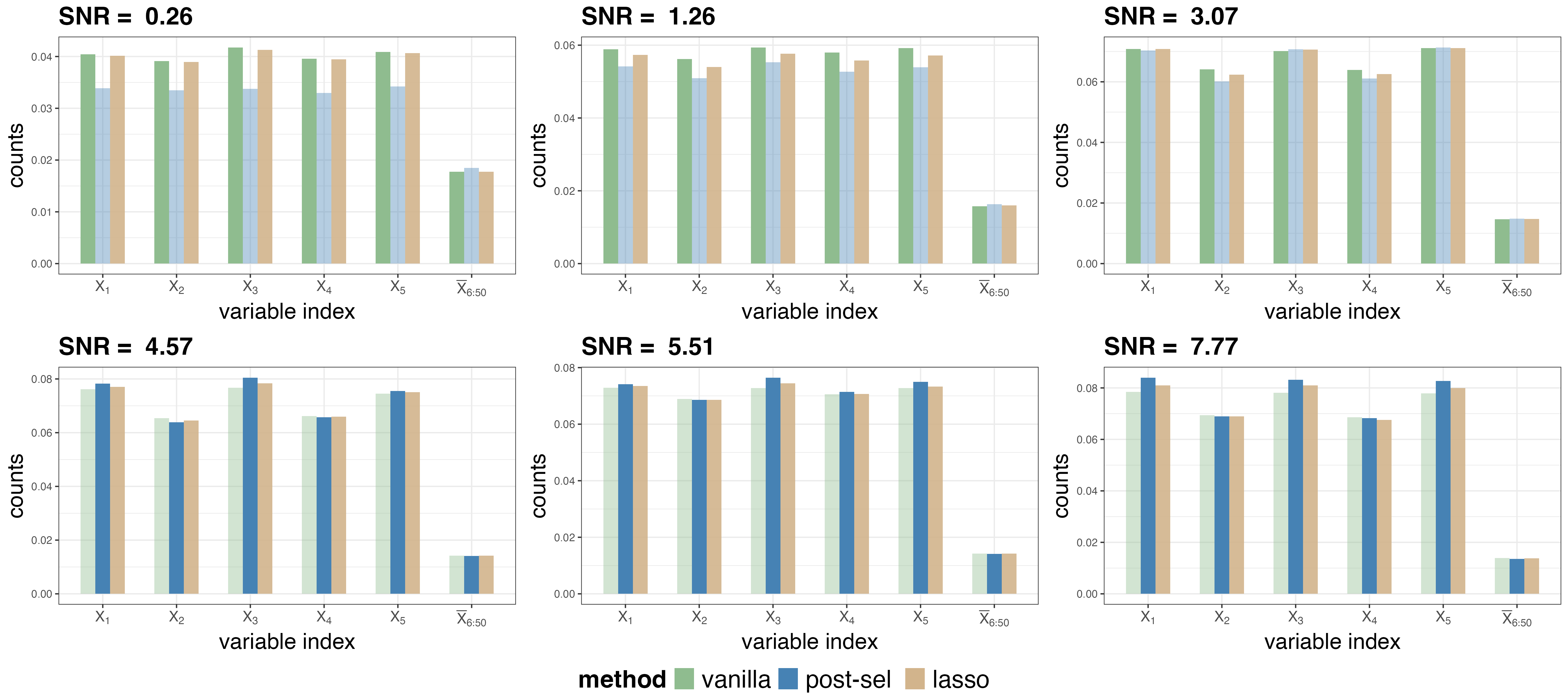}
    \caption{\em{Split Counts Measuring Variable Importance for Polynomial Generating Functions:}
    \small For each signal-to-noise ratio, the split counts of all trees are recorded. For the vanilla forest, we simply divide the counts for each variable by the total number of counts. For the post-selection forest, the proportions are calculated only based on selected trees and counts weighted by the Lasso coefficients. For the Lassoed forest, the weighted averages are used. Better measures are colored in a darker shade.}
    \label{fig:polynomial_variable}
\end{figure}

We can observe that when the signal-to-noise ratio is low, post-selection could not always correctly pick the \textit{good trees} that use true predictors as splitting nodes. As signal-to-noise ratio increases, post-selection does a better job in filtering out trees that are poorly fitted, and selects a small subset of trees that could give a better measure of variable importance than the noisy version given by the vanilla forest. As expected, the Lassoed forest can consistently produce a measure that is admissible, regardless of the signal-to-noise ratio.

\subsection{Tree Ensembles as Input}
To show that the phenomena we observed above are universal, and not unique to the case where data are generated according to simple polynomial functions, now we discuss the case where tree ensembles are used in the data generation process. This is the case where we should expect the vanilla forest to have good performance due to the nature of input data. However, we show below that the Lasso selection step could still further improve the model performance when SNR is high. Moreover, the Lassoed forest again shows its merit in balancing between the two methods and provide an overall good performance across different SNRs.

In particular, we use the following data generating function:
\begin{equation*}
\begin{aligned}
&\begin{cases}
    \hat{T}^0 \leftarrow \text{Tree}(X, y^0)\\
    \hat{T}_1=\hat{T}^0_1(X), \quad \hat{T}_j= \hat{T}^0_j(X)+\rho\cdot\hat{T}_{j-1},\,\forall j\geq2
\end{cases}\\
&\quad y = \sum_{j=1}^p \beta_j \hat{T}_j + \epsilon
\end{aligned}
\end{equation*}
where $y^0\overset{i.i.d.}{\sim} \mathcal{N}(0, I_n)$ are random noises, $\rho = 0.5$ is the correlation between consecutively generated trees, $\mathbf{x}_i\overset{i.i.d.}{\sim} \mathcal{N} (0, I_p)$, $\epsilon \overset{i.i.d.}{\sim} N (0, \sigma^2)$. $\boldsymbol{\beta}$ is again a sparse vector generated by $ \beta_j = b \cdot \text{Unif}(0,c) + (1-b) \cdot 0$, with $b\overset{i.i.d.}{\sim} \text{Bern}(\pi)$, $c=0.1$, $\pi=0.5$.

\vspace{3mm}
As before, we set sample size $n=400$, number of trees $J=200$, number of features $p=50$. We preprocess the data by scaling and centering the response $y$. The adaptive weight  $\theta$ is chosen from a grid of $\{0,0.25,0.5,0.75,1\}$. \autoref{fig:tree_MSE} shows the comparison of mean squared error across different SNRs. \autoref{fig:tree_error} shows the accuracy of error estimates and its impact on the performance of Lassoed forest.

\begin{figure}[t]
    \centering
    \includegraphics[width=1\linewidth]{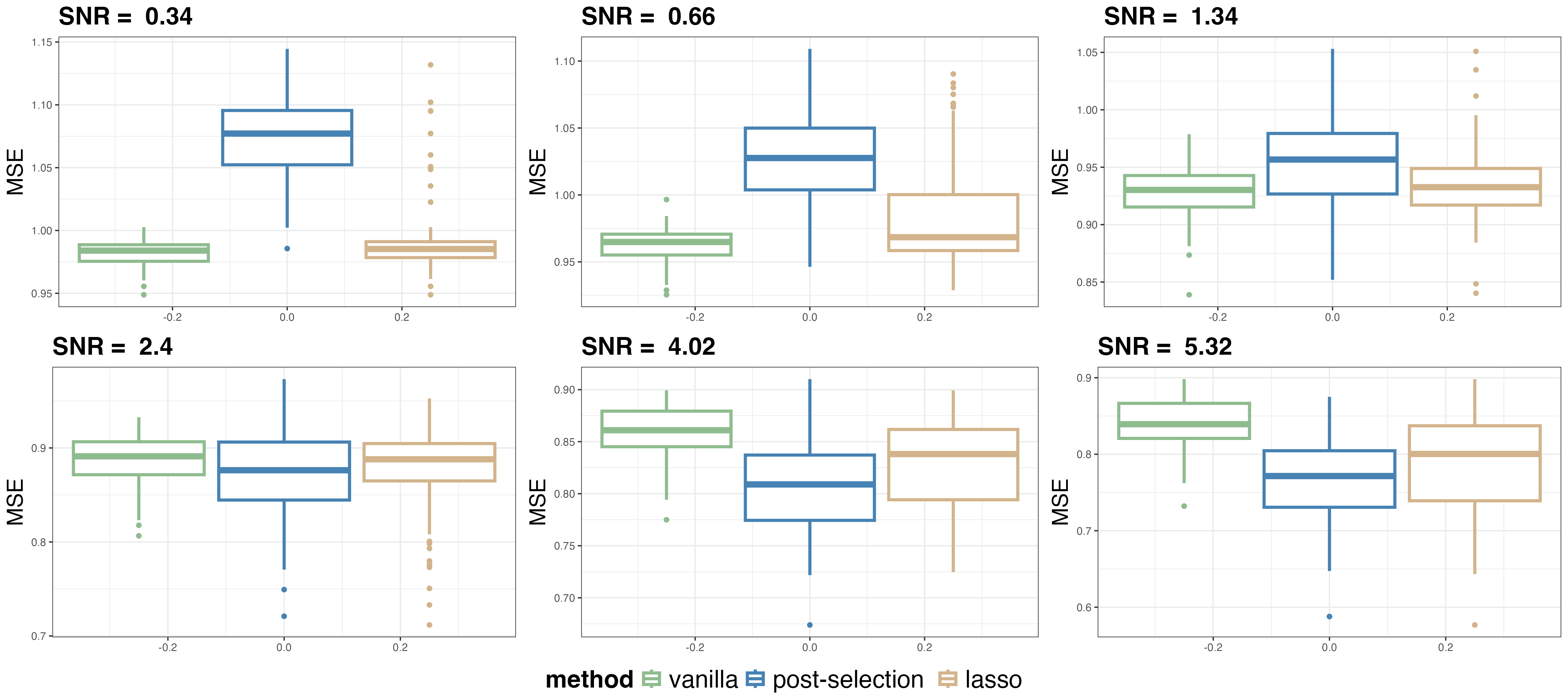}
    \caption{\em{Mean Squared Error for Tree-ensemble Generating Functions}}
    \label{fig:tree_MSE}
\end{figure}

\vspace{3mm}
We have some similar findings for the vanilla and post-selection forest as before, but we also observe something different in terms of the performance of Lassoed forest:
\begin{enumerate}
    \item The post-selection forest is more preferable when the signal-to-noise ratio is relatively high. Applying the Lassoed forest is almost always beneficial, but still comes with a cost when vanilla and post-selection forest have comparable performance, since difference in the noisy error estimates fails to predict the sign of difference in true errors.
    \item We no longer observe a particular SNR setting where the Lassoed forest is strictly better than the other two methods. This can be explained by the smaller gap between the model performance of vanilla and post-selection forest. As discussed in \cref{rm:adaptive}, mean squared error from the adaptive method behaves like a weighted sum of mean squared error from the other two methods, with sum of the weights $\leq1$. However, the numerical results show when the error estimates are noisy and the two benchmark methods have comparable performances, picking the optimal set of weights could be challenging in practice.
\end{enumerate}

\begin{figure}[H]
    \centering
    \includegraphics[width=1\linewidth]{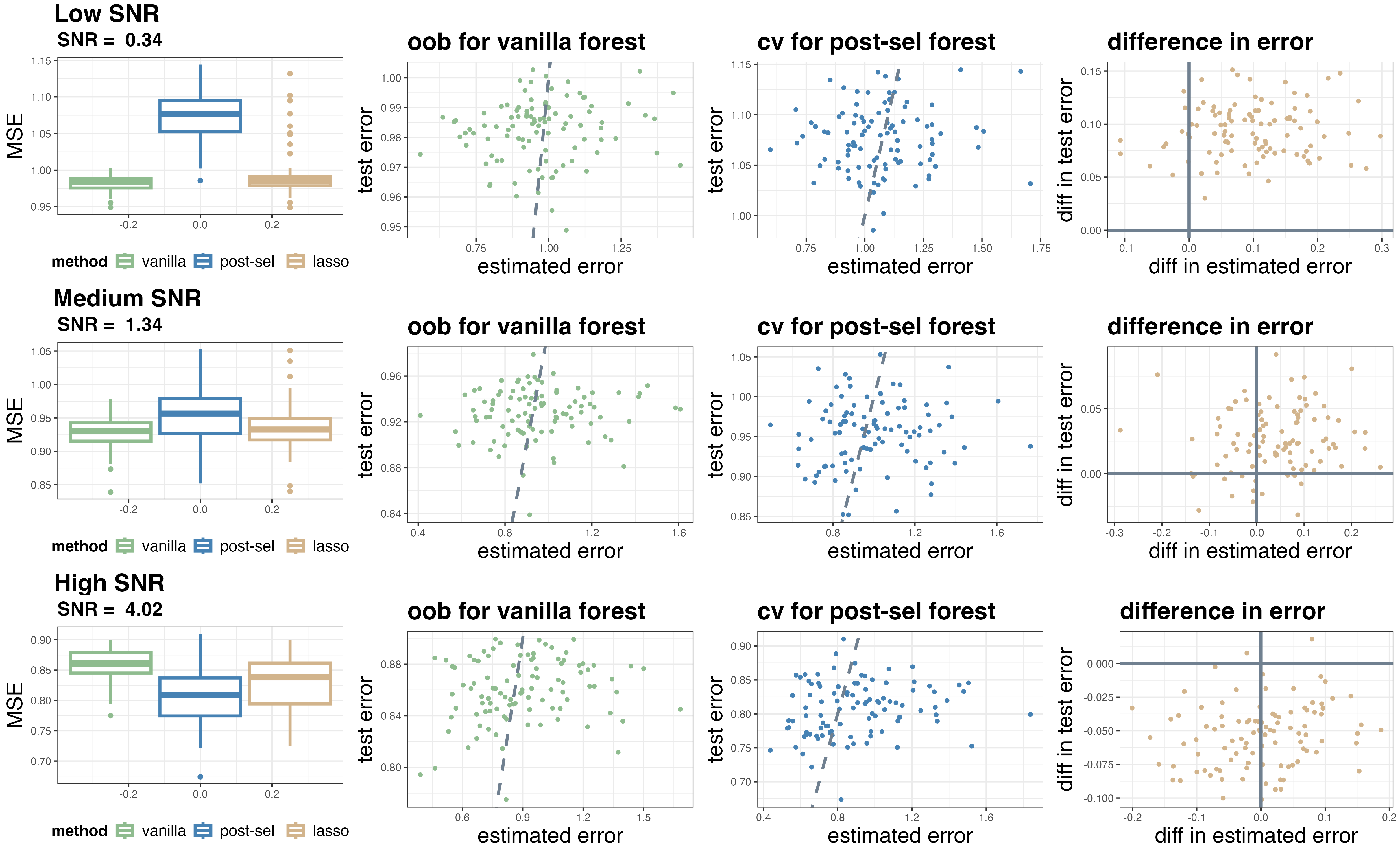}
    \caption{\em{Out-of-bag and Cross-validation Error Estimates for Tree-ensemble Generating Functions}}
    \label{fig:tree_error}
\end{figure}

Lastly, we look at the bias-variance tradeoff when tree ensembles are used as the generating function. \autoref{fig:tree_biasvar} shows the error decomposition of all three methods. As before, we observe the followings:
\begin{enumerate}
    \item The bias of vanilla and post-selection forest scales differently with SNR, where the post-selection forest shows a faster decreasing rate as SNR grows.
    \item The variance of vanilla and post-selection forest, as well as its gap, almost stay constant regardless of SNR. This suggests the variance from data itself is the dominating source of uncertainty, rather than the variance from training base learners.
    \item The variance is small in magnitude for both methods, hence the mean squared error mostly depends on the bias term.
    \item The performance of Lassoed forest is admissible, i.e. at least as good as both vanilla and post-selection forest. However, the ambiguous contrast in the other two methods hinders the adaptive method from picking the optimal weights and attaining further improvements.
\end{enumerate}

\begin{figure}[H]
    \centering
    \includegraphics[width=1\linewidth]{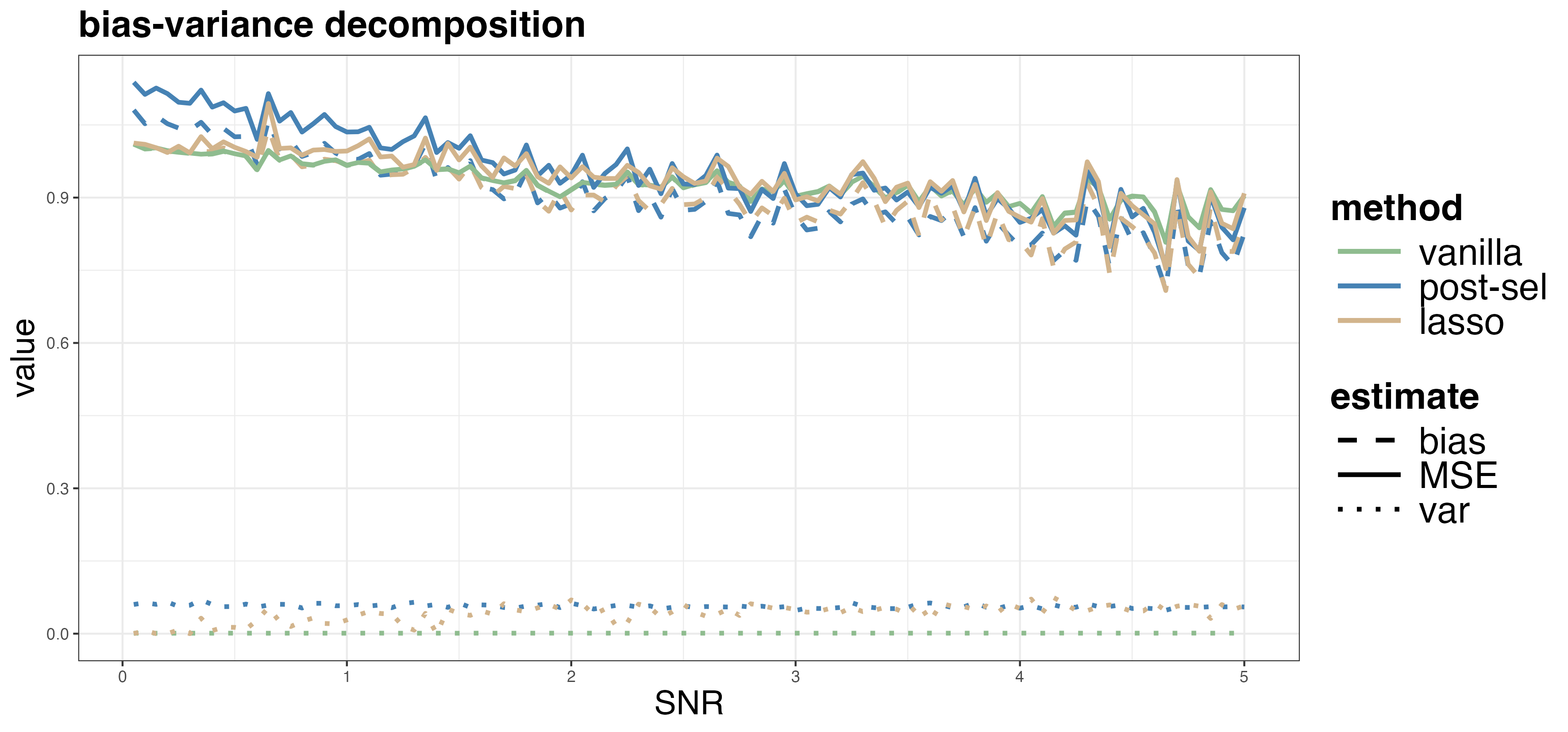}
    \caption{\em{Bias-variance Decomposition for Tree-ensemble Generating Functions}}
    \label{fig:tree_biasvar}
\end{figure}

\section{Experiments} \label{sec:application}
In this section, we provide three examples to demonstrate the utility of our method in real world settings. 
\subsection{Cancer Cell Line Sensitivity Analysis} \label{subsec:CCL}

First, we evaluate the effectiveness of our method on real-world data collected from pre-clinical cancer cell line (CCL) viability screens. We study the regression problem of the sensitivity CCLs to a given drug using CCL gene expression as features. 

\vspace{3mm}
We run our analysis on the dataset processed and provided by \citeasnoun**{Nguyen2023.01.08.522775}. The dataset is a collection of RNA-seq profiles for different sets of cancer cell lines and pharmacological data, from \textit{the Genentech Cell line Screening Initiative} (gCSI) \cite**{haverty2016reproducible} \cite**{klijn2015comprehensive}, \textit{the Cancer Therapeutics Response Portal} (CTRPv2) \cite**{rees2016correlating} \cite**{seashore2015harnessing} \cite**{basu2013interactive}, and \textit{the Genomics of Drug Sensitivity in Cancer} (GDSC2) \cite**{yang2012genomics} \cite**{iorio2016landscape} \cite**{garnett2012systematic}. The dataset includes data of 1,602 unique cell lines, 205 unique drugs, and 233,921 drug sensitivity experiments. For features we use the genes measured in the \textit{the Library of Integrated Network-Based Cellular Signatures} (LINCS) L1000 project \cite**{subramanian2017next}.

\vspace{3mm}
Formally, we take the RNA-seq gene expressions as the feature matrix, where each cell line is one row. The gene expression values are given in units of $\text{log}_2(x_{ij}+0.001)$, where $x_{ij}$ is the expression level of gene $j$ in sample $i$, given in Transcripts per Million (TPM). For each drug from the set $\{$\textit{Vincristine, Paclitaxel, Gemcitabine, Erlotinib, Lapatinib, Vorinostat}$\}$, we take the drug response measured by area above the curve (AAC) as a continuous response variable between 0 and 1, with higher values corresponding to great sensitivity. The choice of drugs is meant to cover targeted drugs and broad-effect chemotherapies in cancer treatment. We first train and test the models on CTRPv2 data, with further validation across different independent datasets provided in \autoref{appendix:CCL}. \autoref{tab:CCL} displays the test error given by different models on a held-out set of the CTRPv2 data, with smaller value indicating better predictive power.

\begin{table}[H]
    \centering
    \includegraphics[width=1\linewidth]{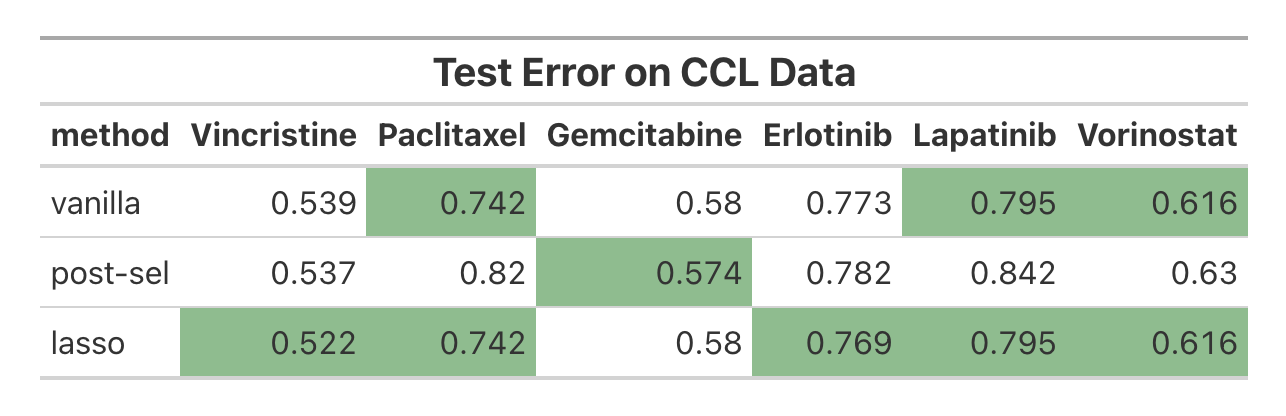}
    \caption{\em{Comparison of Test Error for Different Drugs from CCL Data}}
    \label{tab:CCL}
\end{table}

We can observe the following from the above table:
\begin{enumerate}
    \item The vanilla and post-selection forest have comparable performance for most drugs, with vanilla forest having a slight advantage.
    \item The Lassoed forest always produces the smallest test error, except for on \textit{Gemcitabine}, where it loses by a small margin. There are also cases where the adaptive method outperforms both of its competitors, such as on \textit{Vincristine} and \textit{Erlotinib}. The improvement of test error suggests that the adaptive method can build better predictive models by adaptively implementing post-selection.
\end{enumerate}

A number of parameters in this experiment can be further tuned, such as the number of trees and the maximum number of terminal nodes. However, the results suggest that the Lassoed forest is a promising machine learning tool to predict CCL sensitivity. 

\subsection{Immune Checkpoint Treatment Response Analysis}
Next, we tested our method on the the task of predictive patient response to immune checkpoint inhibitors (ICIs).  We applied our method to two additional types of problems: survival regression and binary classification. In both cases we used gene expression measurements as features and compared our method with the vanilla and post-selection forests as well as standard generalized linear model (GLM) methods.

For our evaluation we used the data from \citeasnoun**{liu2019integrative} which investigated the efficacy of PD-1 blockade in melanoma. We utilized the pre-processed data from the ORCESTRA platform \cite**{mammoliti2021orchestrating}.

For survival regression, we trained our model to predict the progression-free survival time (PFS) using log-rank as the split rule to construct first-stage random forest. We evaluate the method using the concordance index (c-index) \cite**{harrell1982evaluating}. The c-index takes values between 0 and 1, with 1 corresponding to a perfect ordering, 0.5 corresponding to random prediction, and 0 corresponding to a complete inversion. 

For binary classification, we binned RECIST data into two groups, responders (R) (comprising complete response (CR) and partial response (PR) as response) and non-responders (NR) (comprising stable disease (SD) and progressive disease (PD)). We construct probability trees that predict class probability at first stage. We used the classification error rate (1-accuracy) to measure model performance. 

We implement a 10-fold cross-validation and report the average performance metrics. \autoref{tab:immune} shows the comparison of different models, using Cox and binomial GLM as benchmarks respectively.

\begin{table}[H]
    \centering
    \begin{subtable}{0.49\linewidth}
        \centering
        \includegraphics[height=6cm]{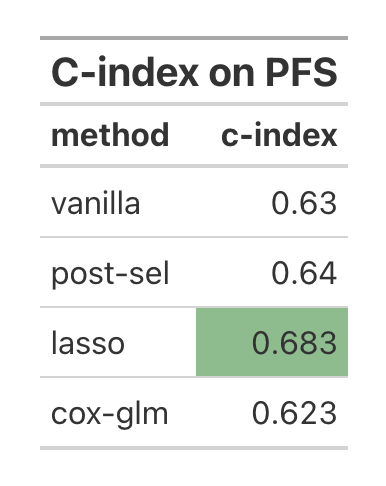}
        \caption{\em{C-index for Survival Regression}}
    \end{subtable}
    \hfill
    \begin{subtable}{0.49\linewidth}
        \centering
        \includegraphics[height=6cm]{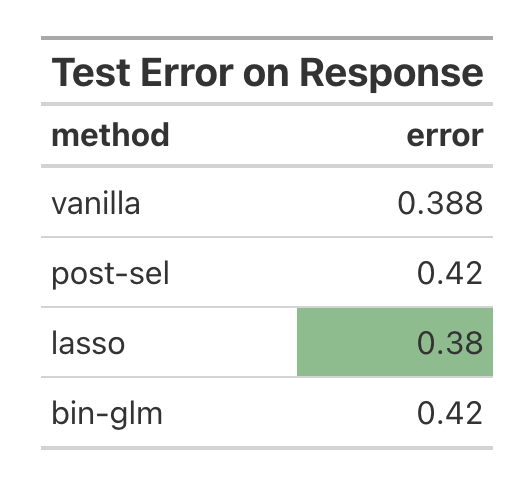}
        \caption{\em{Test Error for Binary Clasification}}
    \end{subtable}
    \caption{\em{Comparison of Model Performance with GLMs}}
    \label{tab:immune}
\end{table}

From the table, we can observe the following:
\begin{enumerate}
    \item Among the three random-forest-based methods, the Lassoed forest has the highest average c-index and lowest mis-classification rate. These results verify that our method can be an improvement over both the vanilla and post-selection forests.
    \item The Lassoed forest outperforms the generalized linear models in both tasks. This suggests that our method is a competitor to "off-the-shelf" parametric models.
\end{enumerate}
The above two analyses on immunotherapy data demonstrate that the Lassoed forest can be readily adapted for different tasks, including survival regression and binary classification. They also show that it provides a systematic way to build non-parametric models with performance improvement.

\subsection{HIV Drug Resistance Analysis}
Last, we test our method on a dataset where the phenotypic drug response $y$ is continuous, but the genotypic predictors $X$ are all binary mutations. The goal is to identify Drug Resistance Mutations (DRMs) associated with Nucleoside RT inhibitors (NRTIs). 

We utilize the data that are publicly available from the Stanford HIV Drug Resistance Database (HIVDB) \cite**{rhee2003human,shafer2006rationale}. The genotype-phenotype correlation dataset contains isolates on which in vitro susceptibility tests were performed using the PhenoSense assay (Monogram, South San Francisco, USA). It includes 12,442 phenotype results from 2,167 isolates obtained from persons infected with HIV-1.

We take amino acid mutation indicators as predictors $X$, log-transformed fold resistance as response $y$, where \textit{fold resistance} is defined as the increase in drug concentration required to inhibit the virus by 50\% (IC50) relative to a wild-type virus. We perform the analysis on six widely used NRTIs: \textit{Lamivudine} (3TC), \textit{Abacavir} (ABC), \textit{Zidovudine} (AZT), \textit{Stavudine} (D4T), \textit{Didanosine} (DDI), \textit{Tenofovir} (TDF). Again, we set the maximum number of terminal nodes to be 5, in order to select potentially important amino acid positions. \autoref{tab:NRTI} shows the comparison of different models in terms of mean squared error on the test set.

\begin{table}[H]
    \centering
    \includegraphics[width=1\linewidth]{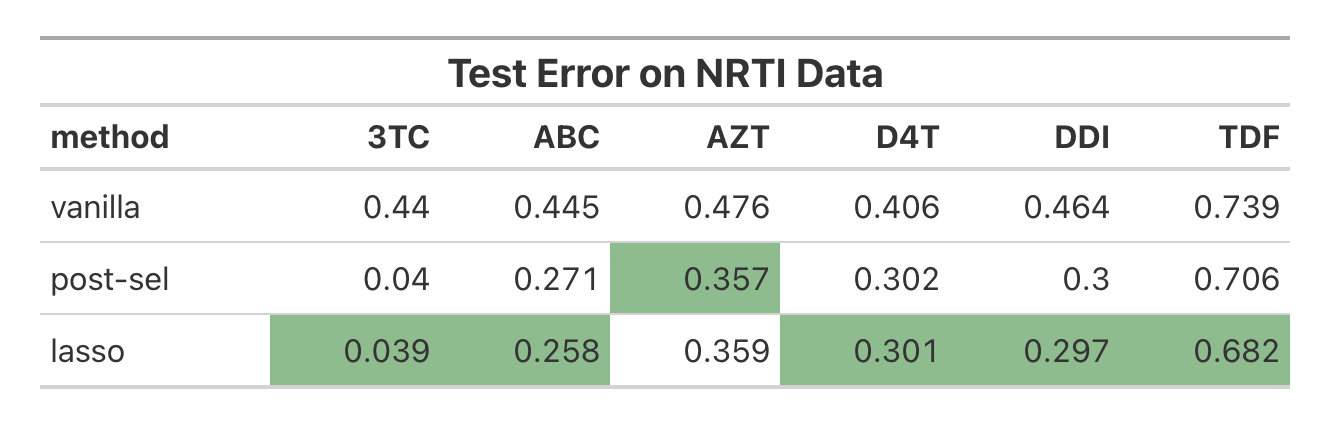}
    \caption{\em{Comparison of Test Error for Different Drugs from NRTI Data}}
    \label{tab:NRTI}
\end{table}

We can observe that:
\begin{enumerate}
    \item In this dataset, the post-selection forest has much better performance in selecting predictive amino acid positions for most of the drugs. In particular, it improved the performance on \textit{Lamivudine} (3TC) by 90\%, \textit{Abacavir} (ABC) by 40\%, \textit{Zidovudine} (AZT) by 25\%, etc.
    \item The Lassoed forest further improves the model, as it almost always gives the best performance among all three methods. This also corresponds to our previous finding that, when there is a greater gap between the performance of vanilla and post-selection forest, the adaptive method tends to be more accurate in picking the optimal weights, which leads to a better mixture model than both others.
\end{enumerate}

The above three experiments, along with our motivating examples in \cref{subsec:examples}, show that the approach of adaptively applying Lasso post-selection can be applied to a variety of real world datasets. They also show that the method is capable of serving multiple purposes, including regression, classification, survival analysis, etc. It not only provides a guard against the volatile performance of vanilla and post-selection forest, but also makes further improvement in terms of predictive power compared with other parametric models.

\section{Discussion} \label{sec:discussion}
In this paper, we proposed an adaptive method that combines random forest and Lasso post-selection. We showed that it could outperform both vanilla and post-selection forest in theory, and verified its performance through simulation. We also demonstrated that it could be a useful tool in variable selection by applications on two real-world biomedical datasets.

\vspace{3mm}
Here are some potential directions for further discussion:
\begin{enumerate}
    \item In \cref{sec:theory}, we mainly focused on base learners that have the same bias and variance and are universal approximators. Random forests are just one such learner. We expect the same phenomena would also be detected if either of the assumptions on bias or variance is loosened, and the theorems are readily generalizable. However, more efforts will likely be required if we remove the expressivity assumption. Removing or relaxing the expressivity assumption corresponds to the case where the base learners are under-parametrized, relative to the true relationship between $X$ and $y$.
    \item Also in \cref{sec:theory}, we showed the bias-variance decomposition in the special case where Lasso reduces to least squares. We provided reasonable conjecture when the additional $l_1$ penalty is implemented. We expect the direct derivation to quantify the changes in bias and variance to be hard due to the lack of closed form solutions of Lasso regression. However, we believe that some asymptotic properties of the Lasso adaptive method could merit further investigation.
    \item As shown in \cref{sec:simulation}, the uncertainty of error estimates often hurts the model's ability to pick the optimal set of weights, and in turn the overall performance. Therefore, we are actively looking for alternatives to out-of-bag and cross-validation error estimates, or improvements to these estimates.
    \item In \cref{sec:simulation} and \cref{sec:application}, we showed the potential of using tree-based methods to perform variable selection. We simply used the split counts of entire forest, or of selected trees, as a measure of variable importance. A modest variation could be to apply the Lasso coefficients on selected trees, and calculate the variable importance based on weighted split counts. The performance of such variation needs to be further investigated.
\end{enumerate}

{\bf Acknowledgements.} R.T. was supported by the NIH (5R01EB001988-
16) and the NSF (19DMS1208164).

\bibliography{reference}
\bibliographystyle{agsm}

\appendix
\section{Proof of \Cref{prop:SNR}}\label{appendix:SNR}
We investigate the bias and variance terms for $\hat{y}_i^{mean}$ and $\hat{y}_i^{reg}$ respectively.

\vspace{3mm}
For $\hat{y}_i^{mean}$, 
\begin{equation*}
\begin{aligned}
    \mathbb{E}[\hat{y}^{mean}] = \mathbb{E}\left[\frac{1}{J}\sum_{j=1}^Jf_j(\mathbf{x}^*)\right]
    =\frac{1}{J}\sum_{j=1}^J(\mu + \eta(s))
&= \mu + \eta(s)
\end{aligned}
\end{equation*}
where the second equality holds by \cref{assump:equibias} on equibias.

\begin{equation*}
\begin{aligned}
    \text{Var}(\hat{y}^{mean}) &= \text{Var}\left(\frac{1}{J}\sum_{j=1}^Jf_j(\mathbf{x}^*)\right)\\
    &= \mathbb{E}\left[ \text{Var}\left(\frac{1}{J}\sum_{j=1}^Jf_j(\mathbf{x}^*)\Big|\mathbf{x}^*\right)\right] + \text{Var}\left(\mathbb{E}\left[\frac{1}{J}\sum_{j=1}^Jf_j(\mathbf{x}^*)\Big|\mathbf{x}^*\right]\right)\\
    &=\frac{1}{J}\mathbb{E}\left[\tau^2(\mathbf{x}^*, s)\right]+ \frac{J(J-1)}{J^2}\mathbb{E} \left[\rho(\mathbf{x}^*, s)\right] + \text{Var}\left(g(\mathbf{x}^*)+\eta(s)\right)\\
    &=\frac{1}{J}\psi(s) +\frac{J(J-1)}{J^2}\omega(s)+ \phi(\Sigma)\\
\end{aligned}
\end{equation*}
where the second equality holds by law of conditional variance, third and last equality hold by \cref{assump:equibias,assump:homoscedasticity} on equibias and homoscedasticity.

\vspace{3mm}
For $\hat{y}^{reg}$, first we assume the oracle $\gamma_j$ in \eqref{eq:est_reg} are known,

\begin{equation*}
\begin{aligned}
    \mathbb{E}[\hat{y}^{reg}] 
    = \mathbb{E}\left[\gamma_0 +\sum_{j=1}^J \gamma_j f_j(\mathbf{x}^*)\right] = -\eta(s) + \sum_{j=1}^J \gamma_j[\mu +\eta(s)] = \mu
\end{aligned}
\end{equation*}
where the second and third equality hold by \cref{assump:expressivity} on expressivity.

\begin{equation*}
\begin{aligned}
    \text{Var}(\hat{y}^{reg}) &=\text{Var}\left(\gamma_0 +\sum_{j=1}^J \gamma_j f_j(\mathbf{x}^*)\right)\\
    &=\mathbb{E}\left[ \text{Var}\left(\sum_{j=1}^J \gamma_j f(\mathbf{x}^*)\Big|\mathbf{x}^*\right)\right] + \text{Var}\left(\mathbb{E}\left[\sum_{j=1}^J \gamma_j f_j(\mathbf{x}^*)\Big|\mathbf{x}^*\right]\right)\\
    &=\sum_{j=1}^J \gamma_j^2\mathbb{E}\left[\tau^2(\mathbf{x}^*, s)\right] + 2\sum_{j<k}^J\gamma_j\gamma_k\mathbb{E}\left[\rho(\mathbf{x}^*, s)\right]+\text{Var}\left(g(\mathbf{x}^*)+\eta(s)\right)\\
    &=\sum_{j=1}^J \gamma_j^2\psi(s) + 2\sum_{j<k}^J\gamma_j\gamma_k \omega(s)+ \phi(\Sigma)\\
    &=\phi(\Sigma)
\end{aligned}
\end{equation*}
where the third and last equality hold by \cref{assump:equibias,assump:homoscedasticity} on equibias and homoscedasticity. The last equality holds by \cref{assump:expressivity} on expressivity, which implies $\sum_{j=1}^J \gamma_j^2\psi(s) + 2\sum_{j<k}^J\gamma_j\gamma_k \omega(s)=0$.

\vspace{3mm}
Now we consider the case where $\gamma_j$ are estimated by $\hat{\gamma}_j$. We assume that under a fixed sample size $N$, $\hat{\gamma}_j$ has conditional expectation $\mathbb{E}[\hat{\gamma}_j|\mathcal{F}] = \gamma_j+M_j(s)$, where $\mathcal{F}= (f_1,...f_J)$ are pre-trained base learners. We also assume that $M_j(s)$ is non-increasing in $s$ and non-negative, $j=0,1,...J$. Now the bias and variance terms become:

\begin{equation*}
\begin{aligned}
    \mathbb{E}[\hat{y}^{reg}] 
    &= \mathbb{E}\left[\mathbb{E}\left[\hat{\gamma}_0 +\sum_{j=1}^J \hat{\gamma}_j f_j(\mathbf{x}^*)\Bigg|\mathcal{F}\right]\right]\\
    &= \mathbb{E}\left[\mathbb{E}[\hat{\gamma}_0|\mathcal{F}]+\sum_{j=1}^J \left(\text{Cov}(\hat{\gamma}_j,f_j(\mathbf{x}^*)|\mathcal{F})+\mathbb{E}\left[\hat{\gamma}_j|\mathcal{F}\right]\mathbb{E}\left[f_j(\mathbf{x}^*)|\mathcal{F}\right]\right)\right]\\
    &= \mathbb{E}\left[\gamma_0 + M_0(s)+\sum_{j=1}^J \mathbb{E}\left[(\gamma_j + M_j(s))f_j(\mathbf{x}^*)|\mathcal{F}\right]\right]\\
    &= \mu + M_0(s) + \mu\sum_{j=1}^J M_j(s)
\end{aligned}
\end{equation*}
where the third equality holds because of the independency of $\mathbf{x}^*$, third equality holds because $g(\mathbf{x}^*) =\gamma_0+\sum_{j=1}^J \gamma_j f_j(\mathbf{x}^*)$ by \cref{assump:expressivity} on expressivity. This shows that $\hat{y}^{reg}$ is no longer unbiased as before. We can also observe that the bias is non-increasing in SNR, and non-negative by assumption. 

\vspace{3mm}
To analyze the variance of $\hat{y}^{reg}$, we assume that a under fixed sample size $N$, $\hat{\gamma}_j$ has conditional variance $\text{Var}(\hat{\gamma}_j|\mathcal{F}) = V_j(s)$, conditional covariance $\text{Cov}(\hat{\gamma}_j, \hat{\gamma}_k|\mathcal{F}) = V_{jk}(s)$, where $\mathcal{F}= (f_1,...f_J)$ are pre-trained base learners. We further assume that $V_j(s)$, $V_{jk}(s)$ are non-increasing in $s$ and non-negative,  $j=0,1,...J$, $j<k$.
\begin{align*}
    \text{Var}(\hat{y}^{reg}) =&\text{Var}\left(\hat{\gamma}_0 +\sum_{j=1}^J \hat{\gamma}_j f_j(\mathbf{x}^*)\right)\\
    =&\mathbb{E}\left[ \text{Var}\left(\hat{\gamma}_0 +\sum_{j=1}^J \hat{\gamma}_j f_j(\mathbf{x}^*)\Big|\mathcal{F}\right)\right] + \text{Var}\left(\mathbb{E}\left[\hat{\gamma}_0 +\sum_{j=1}^J \hat{\gamma}_j f_j(\mathbf{x}^*)\Big|\mathcal{F}\right]\right)\\
    =&\mathbb{E}\left[ \text{Var}(\hat{\gamma}_0|\mathcal{F}) + \sum_{j=1}^J f_j(\mathbf{x}^*)^2 \text{Var}\left(\hat{\gamma}_j |\mathcal{F}\right)\right.\\ &\left.+ 2\sum_{j=1}^J f_j(\mathbf{x}^*)\text{Cov}(\hat{\gamma}_0,\hat{\gamma}_j |\mathcal{F}) + 2\sum_{1\leq j<k}^J f_j(\mathbf{x}^*)f_k(\mathbf{x}^*)\text{Cov}(\hat{\gamma}_j ,\hat{\gamma}_k |\mathcal{F})\right]\\
    &+ \text{Var}\left(\mathbb{E}\left[\hat{\gamma}_0|\mathcal{F}\right] + \sum_{j=1}^J(\text{Cov}(\hat{\gamma}_j,f_j(\mathbf{x}^*)|\mathcal{F})+\mathbb{E}\left[\hat{\gamma}_j|\mathcal{F}\right]\mathbb{E}\left[f_j(\mathbf{x}^*)|\mathcal{F}\right])\right)\\
    =&\mathbb{E}\left[V_0(s) + \sum_{j=1}^J f_j(\mathbf{x}^*)^2 V_j(s)+ 2\sum_{j=1}^J f_j(\mathbf{x}^*)V_{0j}(s) + 2\sum_{1\leq j<k}^J f_j(\mathbf{x}^*)f_k(\mathbf{x}^*)V_{jk}(s)\right] \\
    &\displaybreak[1]+ \text{Var}\left( \gamma_0 + M_0(s) + \sum_{j=1}^J\text{Cov}(\hat{\gamma}_j,f_j(\mathbf{x}^*)|\mathcal{F})+\sum_{j=1}^J \mathbb{E}\left[(\gamma_j + M_j(s)) f_j(\mathbf{x}^*)|\mathcal{F}\right]\right)\\
    =&V_0(s) + \sum_{j=1}^J (\psi(s)+\phi(\Sigma)+(\mu+\eta(s))^2) V_j(s)+ 2\sum_{j=1}^J(\mu+\eta(s))V_{0j}(s)\\
    &+ 2\sum_{1\leq j<k}^J (\omega(s)+\phi(\Sigma)+(\mu+\eta(s))^2)V_{jk}(s) + \text{Var}\left(\sum_{j=1}^J M_j(s) \mathbb{E} \left[f_j(\mathbf{x}^*)|\mathcal{F}\right]\right)
\end{align*}
where the last equality holds because of (i) independency of $\mathbf{x}^*$; (ii) $g(\mathbf{x}^*) = \gamma_0+\sum_{j=1}^J \gamma_j f_j(\mathbf{x}^*)$ by \cref{assump:expressivity}; (iii) $\mathbb{E}[f_j(\mathbf{x}^*)f_k(\mathbf{x}^*)]$ can be computed as follows:
\begin{equation*}
    \begin{aligned}
    \mathbb{E}[f_j(\mathbf{x}^*)f_k(\mathbf{x}^*)]
    &= \text{Cov}(f_j(\mathbf{x}^*),f_k(\mathbf{x}^*)) +\mathbb{E}[f_j(\mathbf{x}^*)]\mathbb{E}[f_k(\mathbf{x}^*)]\\
    &= \mathbb{E}[\text{Cov}(f_j(\mathbf{x}^*),f_k(\mathbf{x}^*)|\mathbf{x}^*)] + \text{Cov}(\mathbb{E}[f_j(\mathbf{x}^*)|\mathbf{x}^*],\mathbb{E}[f_k(\mathbf{x}^*)|\mathbf{x}^*])+(\mu+\eta(s))^2\\
    &= \omega(s)+ \phi(\Sigma)+(\mu+\eta(s))^2
    \end{aligned}
\end{equation*}
and same applies to variance. This shows that the variance is also a monotonically non-increasing function of the SNR.

In summary, with additional assumptions on the monotonicity of conditional expectation and covariance, we showed the mean squared error of both vanilla and Lasso forest are monotonically non-increasing in the signal-to-noise ratio. While the result is not surprising, it serves as a foundation of our proceeding analysis of the bias-variance decomposition. It also helps understanding of the impact of applying Lasso coefficients, and supports our conjecture in \cref{subsec:tradeoff} and \cref{subsec:offset}.

\section{Proof of \Cref{thm:tradeoff}}\label{appendix:tradeoff}
We analyze the bias and variance of $\hat{y}^{reg}$ by first conditioning on all the base learner predictions $\mathcal{P} = \left\{\left(f_1(\mathbf{x}_i), ...f_J(\mathbf{x}_i)\right)\right\}_{i=1}^N$ on $\mathcal{D}$. Denote the base learner prediction matrix for $\{\mathbf{x}_i\}_{i=1}^N$ by $F = \left(1, f_1(\mathbf{x}), ...f_J(\mathbf{x})\right)\in \mathbb{R}^{N\times(J+1)}$, as well as a vector for $\mathbf{x}^*$  by $F^* = \left(1, f_1(\mathbf{x}^*), ...f_J(\mathbf{x}^*)\right)\in \mathbb{R}^{J+1}$. We first consider the case where $J+1<N$:
\begin{equation*}
    \begin{aligned}
    \mathbb{E}[\hat{y}^{reg}] &= \mathbb{E}\left[\mathbb{E}[F^*(F^TF)^{-1}F^Ty|\mathcal{P}, \mathbf{x}^*]\right]\\
    &= \mathbb{E}\left[F^*\mathbb{E}[(F^TF)^{-1}F^T(F\Gamma+\epsilon)|\mathcal{P}]\right]\\
    &=\mathbb{E}[F^*\Gamma]\\
    & = \mu
    \end{aligned}
\end{equation*}
where the second equality holds by independence of $\mathbf{x}^*$ and \cref{assump:expressivity}.

\vspace{3mm}
For variance calculation, we first make assumptions on Gaussianity of the conditional base learner predictions. Without loss of generality, we assume $(f_1(\mathbf{x}),...f_J(\mathbf{x}))|\mathcal{F} \sim \mathcal{N}(\mathbf{0}, \mathbf{W})$, where $\mathcal{F}= (f_1,...f_J)$ are pre-trained base learners, $\mathbf{x} \perp\!\!\!\perp  \mathcal{F}$. Note that if the conditional mean is not $\mathbf{0}$, one can always recenter without impacting the SNR by \cref{rm:SNR}. The conditional covariance matrix $\mathbf{W}$ is defined as follows: let conditional variance $\text{Var}(f_j(\mathbf{x})|\mathcal{F}) = W_{jj}(s)$, conditional covariance $\text{Cov}(f_j(\mathbf{x}), f_k(\mathbf{x})|\mathcal{F}) = W_{jk}(s)$, $\mathbf{W}=\mathbf{W}(s) = [W_{jk}(s)]_{1\leq j,k\leq J}$.
\begin{align*}
    \text{Var}(\hat{y}^{reg}) &=\mathbb{E}\left[\text{Var}(F^*(F^TF)^{-1}F^Ty|\mathcal{P},\mathbf{x}^*)\right] + \text{Var}\left(\mathbb{E}[F^*(F^TF)^{-1}F^Ty|\mathcal{P},\mathbf{x}^*]\right) \\
    &=\mathbb{E}\left[\text{Var}(F^*(F^TF)^{-1}F^T(F\Gamma+\epsilon)|\mathcal{P},\mathbf{x}^*)\right] + \text{Var}\left(\mathbb{E}[F^*(F^TF)^{-1}F^T(F\Gamma+\epsilon)|\mathcal{P},\mathbf{x}^*]\right) \\
    &\displaybreak[1]=\mathbb{E}\left[F^*(F^TF)^{-1}F^T\text{Var}(\epsilon)F(F^TF)^{-1}F^{*T}\right] + \text{Var}\left(\gamma_0 + \sum_{j=1}^J \gamma_j f_j(\mathbf{x}^*)\right)\\
    &=\sigma^2 \mathbb{E}\left[F^*(F^TF)^{-1}F^{*T}\right]+ \text{Var}(g(\mathbf{x}^*))\\
    &=\sigma^2 \mathbb{E}\left[\mathbb{E}\left[\text{Tr}\left(F^{*T}F^*(F^TF)^{-1}\right)|\mathcal{F}\right]\right]+ \phi(\Sigma)\\
    &=\sigma^2 \text{Tr}\left(\mathbb{E}\left[\mathbb{E}\left[F^{*T}F^*|\mathcal{F}\right]\mathbb{E}\left[(F^TF)^{-1}|\mathcal{F}\right]\right]\right)+ \phi(\Sigma)
\end{align*}
where the last equality holds by independence of $\mathbf{x}^*$. By the assumption of Gaussianity, now $F^{*T}F^*|\mathcal{F}$ follows Wishart distribution $\mathcal{W}_J(\mathbf{W}, N)$, while $(F^TF)^{-1}|\mathcal{F}$ follows inverse-Wishart distribution $\mathcal{W}^{-1}_J(\mathbf{W}^{-1}, N)$. Hence, their moments are as follows:
\begin{equation*}
    \begin{aligned}    \mathbb{E}\left[F^{*T}F^*|\mathcal{F}\right]
    &= \mathbf{W}\\
    \mathbb{E}\left[(F^TF)^{-1}|\mathcal{F}\right]&= \frac{\mathbf{W}^{-1}}{N-J-1}    \end{aligned}
\end{equation*}
which gives $\text{Var}(\hat{y}^{reg}) = \frac{\sigma^2}{N-J-1}+ \phi(\Sigma)$.

\vspace{3mm}
Now we consider the case where $J+1>N$, which suggests the OLS estimator to be in form of the minimum-norm interpolator \cite**{mallinar2024minimum} $\hat{\Gamma} = F^T(FF^T)^{-1}y$:
\begin{equation*}
    \begin{aligned}
    \mathbb{E}[\hat{y}^{reg}] &= \mathbb{E}\left[\mathbb{E}[F^*F^T(FF^T)^{-1}y|\mathcal{P}, \mathbf{x}^*]\right]\\
    &=\mathbb{E}[F^*F^T(FF^T)^{-1}F]\Gamma\\
    \end{aligned}
\end{equation*}
which suggests that the OLS estimator is no longer unbiased as before.

\vspace{3mm}
Similarly to the previous case, we compute variance as follows:
\begin{equation*}
    \begin{aligned}
    \text{Var}(\hat{y}^{reg}) =&\mathbb{E}\left[\text{Var}(F^*F^T(FF^T)^{-1}y|\mathcal{P},\mathbf{x}^*)\right] + \text{Var}\left(\mathbb{E}[F^*F^T(FF^T)^{-1}y|\mathcal{P},\mathbf{x}^*]\right) \\
    =&\mathbb{E}\left[\text{Var}(F^*F^T(FF^T)^{-1}(F\Gamma+\epsilon)|\mathcal{P},\mathbf{x}^*)\right] + \text{Var}\left(\mathbb{E}[F^*F^T(FF^T)^{-1}(F\Gamma+\epsilon)|\mathcal{P},\mathbf{x}^*]\right) \\
    =&\sigma^2 \mathbb{E}\left[F^*F^T(FF^T)^{-1}(FF^T)^{-1}FF^{*T}\right]+ \Gamma^T\text{Var}(F^*F^T(FF^T)^{-1}F)\Gamma
    \end{aligned}
\end{equation*}

The above derivations show that when $J>N$, the regression post-weighting estimator $\hat{y}^{reg}$ undergoes changes in its behavior: no longer unbiased, and variance of linear regression is known to be large due to overfitting. Even though analyzing the exact form of bias and variance separately is hard, due to the form of minimum-norm interpolator, we still can directly compute the mean squared error and analyze its asymptotics with some additional assumptions. Assume $\gamma_j \overset{i.i.d.}{\sim} \mathcal{N}(0,1/J)$, $\mathbf{W} = \mathbf{I}_J$.

\begin{align*}
    \mathbb{E}\left[(y^*-\hat{y}^{reg})^2\right]=& \mathbb{E}\left[(F^*\Gamma-F^*F^T(FF^T)^{-1}(F\Gamma+\epsilon))^2\right]+\sigma^2\\
    =&\mathbb{E}\left[||F^*(I-F^T(FF^T)^{-1}F)\Gamma||_2^2\right]+\mathbb{E}\left[||F^*F^T(FF^T)^{-1}\epsilon||_2^2\right]+\sigma^2\\
    =&\frac{1}{J}\mathbb{E}\left[F^*(I-F^T(FF^T)^{-1}F)F^{*T}\right]+\sigma^2\mathbb{E}\left[F^*F^T(FF^T)^{-1}(FF^T)^{-1}FF^{*T}\right]+\sigma^2\\
    =&\frac{1}{J}\text{Tr}(\mathbb{E}\left[\mathbb{E}\left[F^{*T}F^*|\mathcal{F}\right]\mathbb{E}\left[(I-F^T(FF^T)^{-1}F)|\mathcal{F}\right]\right])\\
    &+\sigma^2\text{Tr}(\mathbb{E}\left[F^{*T}F^*|\mathcal{F}\right]\mathbb{E}\left[F^T(FF^T)^{-1}(FF^T)^{-1}F|\mathcal{F}\right])+\sigma^2\\
    =& \frac{J-N}{J} + \sigma^2 \left(\frac{N}{J-N}+1\right)\\
    \lim_{\substack{J,N \rightarrow \infty \\ J/N \rightarrow r}}\mathbb{E}\left[(y^*-\hat{y}^{reg})^2\right]=& \lim_{\substack{J,N \rightarrow \infty \\ J/N \rightarrow r}}\frac{J-N}{J} + \sigma^2\frac{J}{J-N} = 1 - \frac{1}{r} + \sigma^2 \frac{1}{1-1/r},\quad r>1
\end{align*}
where the second equality holds as $I-F^T(FF^T)^{-1}F$ is orthogonal to $F^T(FF^T)^{-1}$, the last equality holds by applying the trace trick and moment of the inverse-Wishart distribution, as $(FF^T)^{-1}|\mathcal{F} \sim \mathcal{W}^{-1}_N(I_N,J+1)$. The result is in line with \citeasnoun**{hastie2022surprises} in the case where the number of predictors exceeds the number of samples, except that we also account for the randomness $\epsilon^*$ in $y^*$.

\vspace{3mm}
In summary, we showed that the prediction mean squared error for random forest with regression post-weighting is as follows:
\begin{equation*}
    \mathbb{E}\left[(y^*-\hat{y}^{reg})^2\right]=
    \begin{cases}
    \frac{\sigma^2}{N-J-1}+ \phi(\Sigma),&\text{if } J+1<N\\
    \frac{J-N}{J} + \sigma^2\frac{J}{J-N}, &\text{if } J>N
    \end{cases}
\end{equation*}
which suggests that if we do not add an $l_1$-regularization to the regression, then the performance of post-weighting is sensitive to the relative counts of trees and samples. In particular, when the number of trees exceeds the number of samples, then the regression post-weighting on pre-trained trees could fail to generalize to a new data point.

\section{Proof of \Cref{thm:adaptive}}\label{appendix:adaptive}
We investigate the bias and variance term for $\hat{y}^{ada}$, in the case where $\lambda = 0$.
\begin{equation*}
    \mathbb{E}[\hat{y}^{ada}] = (1-\theta) \mathbb{E}[\hat{y}^{mean}] + \theta\mathbb{E}[\hat{y}^{reg}]=(1-\theta)(\mu+\eta(s))+\theta\mu=\mu+(1-\theta)\eta(s)
\end{equation*}
which is unbiased if and only if $\theta = 1$.

\vspace{3mm}
Now let tree models trained on first half of data $\mathcal{T} = (\widehat{T}_1,...\widehat{T}_J)$, tree predictions on second half of data $\widehat{\mathbf{T}}= (\widehat{\mathbf{T}}_1,...\widehat{\mathbf{T}}_N)^T=(1, \widehat{T}_{\cdot 1},...\widehat{T}_{\cdot J}) \in \mathbb{R}^{N/2 \times (J+1)}$, on a new given point $\widehat{T}^*= (1,\widehat{T}^*_1,...\widehat{T}^*_J)$, average tree prediction $\overline{T^*} =\frac{1}{J}\sum_{j=1}^J \widehat{T}^*_j$.
\begin{equation*}
    \begin{aligned}
    \text{Var}(\hat{y}^{ada}) 
    =&\text{Var}\left((1-\theta) \overline{T^*} +\theta\widehat{T}^*(\widehat{\mathbf{T}}^T\widehat{\mathbf{T}})^{-1}\widehat{\mathbf{T}}^Ty\right)\\
    =&\mathbb{E}\left[\text{Var}\left((1-\theta) \overline{T^*}+\theta g(\mathbf{x}^*) + \theta\widehat{T}^*(\widehat{\mathbf{T}}^T\widehat{\mathbf{T}})^{-1}\widehat{\mathbf{T}}^T\epsilon\Big|\mathbf{x}^*\right)\right]\\ 
    &+ \text{Var}\left( \mathbb{E}\left[(1-\theta) \overline{T^*} +\theta g(\mathbf{x}^*) + \theta\widehat{T}^*(\widehat{\mathbf{T}}^T\widehat{\mathbf{T}})^{-1}\widehat{\mathbf{T}}^T\epsilon\Big|\mathbf{x}^*\right]\right) \\
    =&\mathbb{E}\left[\text{Var}\left((1-\theta) \overline{T^*} + \theta\widehat{T}^*(\widehat{\mathbf{T}}^T\widehat{\mathbf{T}})^{-1}\widehat{\mathbf{T}}^T\epsilon\Big|\mathbf{x}^*\right)\right]+ \text{Var}\left(g(\mathbf{x}^*)\right)\\
    =&\mathbb{E}\left[(1-\theta) ^2\text{Var}\left(\overline{T^*}\Big|\mathbf{x}^*\right) + \theta^2\text{Var}\left(\widehat{T}^*(\widehat{\mathbf{T}}^T\widehat{\mathbf{T}})^{-1}\widehat{\mathbf{T}}^T\epsilon\Big|\mathbf{x}^*\right)\right.\\
    &+\left. 2\theta(1-\theta)\text{Cov}\left(\overline{T^*}, \widehat{T}^*(\widehat{\mathbf{T}}^T\widehat{\mathbf{T}})^{-1}\widehat{\mathbf{T}}^T\epsilon\Big|\mathbf{x}^*\right)\right]+ \text{Var}\left(g(\mathbf{x}^*)\right)\\
    =&(1-\theta)^2\mathbb{E}\left[\text{Var}\left(\overline{T^*}\Big|\mathbf{x}^*\right)\right]+\theta^2\mathbb{E}\left[\mathbb{E}\left[\widehat{T}^*(\widehat{\mathbf{T}}^T\widehat{\mathbf{T}})^{-1}\widehat{\mathbf{T}}^T\epsilon\epsilon^T\widehat{\mathbf{T}}(\widehat{\mathbf{T}}^T\widehat{\mathbf{T}})^{-1}\widehat{T}^{*T}\Big|\mathbf{x}^*\right]\right]
    + \text{Var} \left(g(\mathbf{x}^*)\right)\\
    =&(1-\theta)^2\mathbb{E}\left[\text{Var}\left(\overline{T^*}\Big|\mathbf{x}^*\right)\right]+\theta^2\sigma^2\text{Tr}\left(\mathbb{E}\left[\mathbb{E}\left[\widehat{T}^{*T}\widehat{T}^*\Big|\mathcal{T}\right]\mathbb{E}\left[(\widehat{\mathbf{T}}^T\widehat{\mathbf{T}})^{-1}\Big|\mathcal{T}\right]\right]\right)
    + \text{Var} \left(g(\mathbf{x}^*)\right)\\
    =&(1-\theta)^2\left(\frac{1}{J}\psi(s) +\frac{J(J-1)}{J^2}\omega(s)\right)+\theta^2\frac{\sigma^2}{N-J-1}+ \phi(\Sigma)
    \end{aligned}
\end{equation*}
where we first condition on the independent new data point $\mathbf{x}^*$, then condition on the fitted tree models $\mathcal{T}$.

\section{Variance and Covariance of Base Learners}\label{appendix:variance}
We derive the variance $\psi(s)$ and covariance $\omega(s)$ of base learners defined in \cref{assump:homoscedasticity} under different modeling assumptions, to facilitate the direct comparison of prediction mean squared errors in \cref{thm:tradeoff}. 

Throughout, we assume the relation between $y_i$ and $\mathbf{x}_i=(x_{i1},x_{i2})$ follows a bivariate linear model:
\begin{equation*}
    y_i = g(\mathbf{x}_i)+ \epsilon_i = \beta_0 +\beta_1x_{i1}+\beta_2x_{i2} +\epsilon_i
\end{equation*}
where $\beta = (\beta_0, \beta_1, \beta_2)$ is fixed, $\epsilon_i$ has mean $0$ and variance $\sigma^2$, $\epsilon_i \perp\!\!\!\perp \mathbf{x}_i$. We assume Gaussianity on $\mathbf{x}_i\overset{iid}{\sim}\mathcal{N}(\mathbf{0},\Sigma)$,  accordingly $\text{Var}(\mathbf{x}_i\beta) =\beta_1^2\Sigma_{11} +2\beta_1\beta_2\Sigma_{12}+\beta_2^2\Sigma_{22} := \phi(\Sigma)$. Let $s:=\frac{\text{Var}(g(\mathbf{x}_i))}{\sigma^2}=\frac{\phi(\Sigma)}{\sigma^2}$ be the signal-to-noise ratio (SNR), and keep $\phi(\Sigma)$ fixed. 

\vspace{3mm}
We investigate variance and covariance under the following two modeling assumptions: one with the correct model specifications, the other with wrong model specifications and only including one of the variables in regression.

\subsection{Correct Model Specifications}
First, we assume the regression model is correctly specified:
\begin{equation*}
    \begin{aligned}
    f_j(\mathbf{x}_i) &= \beta_0 +\beta_1x_{i1}+\beta_2x_{i2},\quad \forall 1\leq j\leq J\\
    \hat{f}_j(\mathbf{x}^*) &= \mathbf{x}^*(\mathbf{X}^T\mathbf{X})^{-1}\mathbf{X}^Ty
    \end{aligned}
\end{equation*}
where $\mathbf{X}\in\mathbb{R}^{N_j\times3}$ is a bootstrap feature matrix with a column of $\mathbf{1}$, $N_j$ is the effective number of unique samples, $(\mathbf{X}^T\mathbf{X})^{-1}\mathbf{X}^Ty$ is the OLS estimator, $\mathbf{x}^* \perp\!\!\!\perp \mathbf{X}$ is a new independent data point.

\vspace{3mm}
We are interested in the following two statistics:
\begin{equation*}
    \begin{aligned}
    \psi(s)&:=\mathbb{E}[\tau^2(\mathbf{x}^*, s)] = \mathbb{E}\left[\textup{Var}(\hat{f}_j(\mathbf{x}^*)|\mathbf{x}^*)\right], \quad1\leq j\leq J\\
    \omega(s)&:=\mathbb{E}[\rho(\mathbf{x}^*, s)] = \mathbb{E}\left[\textup{Cov}(\hat{f}_j(\mathbf{x}^*), \hat{f}_k(\mathbf{x}^*)|\mathbf{x}^*)\right], \quad 1\leq j<k\leq J
    \end{aligned}
\end{equation*}

They can be computed as follows:
\begin{equation*}
    \begin{aligned}
    \mathbb{E}\left[\textup{Var}(\hat{f}_j(\mathbf{x}^*)|\mathbf{x}^*)\right]
    &=\mathbb{E}\left[\textup{Var}(\mathbf{x}^*(\mathbf{X}^T\mathbf{X})^{-1}\mathbf{X}^Ty|\mathbf{x}^*)\right]\\
    &=\mathbb{E}\left[\textup{Var}(\mathbf{x}^*(\mathbf{X}^T\mathbf{X})^{-1}\mathbf{X}^T(\mathbf{X}\beta +\epsilon)|\mathbf{x}^*)\right]\\
    &=\mathbb{E}\left[\mathbf{x}^*\mathbb{E}[(\mathbf{X}^T\mathbf{X})^{-1}\mathbf{X}^T\epsilon\epsilon^T\mathbf{X}(\mathbf{X}^T\mathbf{X})^{-1}]\mathbf{x}^{*T}\right]\\
    &=\sigma^2\text{Tr}\left(\mathbb{E}\left[\mathbf{x}^*\mathbb{E}[(\mathbf{X}^T\mathbf{X})^{-1}]\mathbf{x}^{*T}\right]\right)\\
    &=\frac{\sigma^2}{N_j-3} =O\left(\frac{1}{s}\right)
    \end{aligned}
\end{equation*}
where the second last equality holds by moments of Wishart distribution and inverse-Wishart distribution.

\begin{equation*}
    \begin{aligned}
    \mathbb{E}\left[\textup{Cov}(\hat{f}_j(\mathbf{x}^*), \hat{f}_k(\mathbf{x}^*)|\mathbf{x}^*)\right] &=\mathbb{E}\left[\textup{Cov}\left(\mathbf{x}^*(\mathbf{X}^{(j)T}\mathbf{X}^{(j)})^{-1}\mathbf{X}^{(j)T}y^{(j)}, \mathbf{x}^*(\mathbf{X}^{(k)T}\mathbf{X}^{(k)})^{-1}\mathbf{X}^{(k)T}y^{(k)}|\mathbf{x}^*\right)\right]\\
    &=\mathbb{E}\left[\mathbf{x}^*\textup{Cov}\left((\mathbf{X}^{(j)T}\mathbf{X}^{(j)})^{-1}\mathbf{X}^{(j)T}\epsilon^{(j)}, (\mathbf{X}^{(k)T}\mathbf{X}^{(k)})^{-1}\mathbf{X}^{(k)T}\epsilon^{(k)}\right)\mathbf{x}^{*T}\right]\\
    &=\text{Tr}\left(\mathbb{E}\left[\mathbf{x}^*\mathbb{E}\left[(\mathbf{X}^{(j)T}\mathbf{X}^{(j)})^{-1}\mathbf{X}^{(j)T}\epsilon^{(j)}\epsilon^{(k)T}\mathbf{X}^{(k)}(\mathbf{X}^{(k)T}\mathbf{X}^{(k)})^{-1}\right]\mathbf{x}^{*T}\right]\right)\\
    &=\frac{\sigma^2}{N_{jk}-3} =O\left(\frac{1}{s}\right)
    \end{aligned}
\end{equation*}
where the second last equality uses the fact that $\mathbb{E}[\epsilon^{(j)}\epsilon^{(k)T}]$ has most of its entries equal $0$, except for when two rows from $\mathbf{X}^{(j)}$ and $\mathbf{X}^{(k)}$ are identical, $N_{jk}$ is the effective number of identical samples.

\vspace{3mm}
In summary, we showed that under linear assumptions and correct model specifications, both variance and covariance of the base learners $\psi(s), \omega(s) = O\left(\frac{1}{s}\right)$.

\subsection{Incorrect Model Specifications}
Now, we assume the regression model is incorrectly specified:
\begin{equation*}
    \begin{aligned}
    f_j(\mathbf{x}_i) &= \beta_0 +\beta_1x_{i1},\quad \forall 1\leq j\leq J\\
    \hat{f}_j(\mathbf{x}^*) &= \mathbf{x}^*(\mathbf{X_1}^T\mathbf{X_1})^{-1}\mathbf{X_1}^Ty
    \end{aligned}
\end{equation*}
where $\mathbf{X_1}\in\mathbb{R}^{N_j\times2}$ is a bootstrap feature matrix with the intercept and only $\{x_{i1}\}_{i=1}^{N_j}$. We further define $\mathbf{X_2}\in\mathbb{R}^{N_j\times1}$ contains $\{x_{i2}\}_{i=1}^{N_j}$, partial coefficients $\beta_{01} = (\beta_0, \beta_1)$.

\vspace{3mm}
Now variance and covariance of the univariate base learners can are as follows:
\begin{equation*}
    \begin{aligned}
    \mathbb{E}\left[\textup{Var}(\hat{f}_j(\mathbf{x}^*)|\mathbf{x}^*)\right]
    &=\mathbb{E}\left[\textup{Var}(\mathbf{x}^*(\mathbf{X_1}^T\mathbf{X_1})^{-1}\mathbf{X_1}^Ty|\mathbf{x}^*)\right]\\
    &=\mathbb{E}\left[\textup{Var}(\mathbf{x}^*(\mathbf{X_1}^T\mathbf{X_1})^{-1}\mathbf{X_1}^T(\mathbf{X_1}\beta_{01} + \mathbf{X_2}\beta_2+\epsilon)|\mathbf{x}^*)\right]\\
    &=\beta_2^2\text{Tr}\left(\mathbb{E}\left[\mathbf{x}^*\textup{Var}((\mathbf{X_1}^T\mathbf{X_1})^{-1}\mathbf{X_1}^T\mathbf{X_2})\mathbf{x}^{*T}\right]\right)+\sigma^2\text{Tr}\left(\mathbb{E}\left[\mathbf{x}^*\mathbb{E}[(\mathbf{X_1}^T\mathbf{X_1})^{-1}]\mathbf{x}^{*T}\right]\right)\\
    &=\beta_2^2\text{Tr}\left(\Sigma\textup{Var}((\mathbf{X_1}^T\mathbf{X_1})^{-1}\mathbf{X_1}^T\mathbf{X_2})\right)+\frac{\sigma^2}{N_j-2}
    \end{aligned}
\end{equation*}

If we assume further that $\mathbf{X_1}$ is independent of $\mathbf{X_2}$, then the above can be further simplified as:
\begin{equation*}
    \begin{aligned}
    \textup{Var}\left((\mathbf{X_1}^T\mathbf{X_1})^{-1}\mathbf{X_1}^T\mathbf{X_2}\right)
    =&\mathbb{E}\left[(\mathbf{X_1}^T\mathbf{X_1})^{-1}\mathbf{X_1}^T\mathbf{X_2}\mathbf{X_2}^T\mathbf{X_1}(\mathbf{X_1}^T\mathbf{X_1})^{-1}\right]\\
    &+\mathbb{E}\left[(\mathbf{X_1}^T\mathbf{X_1})^{-1}\mathbf{X_1}^T\mathbf{X_2}\right]^T\mathbb{E}\left[(\mathbf{X_1}^T\mathbf{X_1})^{-1}\mathbf{X_1}^T\mathbf{X_2}\right]\\
    =&\mathbb{E}\left[\mathbf{X_2}\mathbf{X_2}^T\right]\mathbb{E}\left[(\mathbf{X_1}^T\mathbf{X_1})^{-1}\right]
    \end{aligned}
\end{equation*}
where the second equality holds as $\mathbf{X_2}$ has independent rows, and $\mathbb{E}\left[\mathbf{X_2}\mathbf{X_2}^T\right]$ is diagonal with identical value. Accordingly, 
\begin{equation*}
    \mathbb{E}\left[\textup{Var}(\hat{f}_j(\mathbf{x}^*)|\mathbf{x}^*)\right] =\frac{\beta_2^2\Sigma_{22}}{N_j-2} +\frac{\sigma^2}{N_j-2} = O(s)+O\left(\frac{1}{s}\right)= O(s)
\end{equation*}
which shows that under-parametrized base learners with incorrect model specifications do not benefit from variance reduction as SNR grows. Rather, they suffer from variance increase, as they are missing more information about the response.

\begin{equation*}
    \begin{aligned}
    \mathbb{E}\left[\textup{Cov}(\hat{f}_j(\mathbf{x}^*), \hat{f}_k(\mathbf{x}^*)|\mathbf{x}^*)\right] &=\mathbb{E}\left[\textup{Cov}\left(\mathbf{x}^*(\mathbf{X_1}^{(j)T}\mathbf{X_1}^{(j)})^{-1}\mathbf{X_1}^{(j)T}y^{(j)}, \mathbf{x}^*(\mathbf{X_1}^{(k)T}\mathbf{X_1}^{(k)})^{-1}\mathbf{X_1}^{(k)T}y^{(k)}|\mathbf{x}^*\right)\right]\\
    =&\beta_2^2\mathbb{E}\left[\mathbf{x}^*\textup{Cov}\left((\mathbf{X_1}^{(j)T}\mathbf{X_1}^{(j)})^{-1}\mathbf{X_1}^{(j)T}\mathbf{X_2}^{(j)}, (\mathbf{X_1}^{(k)T}\mathbf{X_1}^{(k)})^{-1}\mathbf{X_1}^{(k)T}\mathbf{X_2}^{(k)}\right)\mathbf{x}^{*T}\right]\\
    &+\mathbb{E}\left[\mathbf{x}^*\textup{Cov}\left((\mathbf{X_1}^{(j)T}\mathbf{X_1}^{(j)})^{-1}\mathbf{X_1}^{(j)T}\epsilon^{(j)}, (\mathbf{X_1}^{(k)T}\mathbf{X_1}^{(k)})^{-1}\mathbf{X_1}^{(k)T}\epsilon^{(k)}\right)\mathbf{x}^{*T}\right]\\
    =&\frac{\beta_2^2\Sigma_{22}}{N_{jk}-2}+\frac{\sigma^2}{N_{jk}-2}\\
    =&O(s)+O\left(\frac{1}{s}\right)
    \end{aligned}
\end{equation*}

\vspace{3mm}
In summary, we showed that under linear assumptions and incorrect model specifications, for under-parametrized linear regression base learners $\psi(s), \omega(s) = O\left(s\right)$. In other words, both variance and covariance of the under-parametrized base learners scale up as SNR grows.

\section{Method Validation on Independent CCL Screens} \label{appendix:CCL}
As mentioned in \cref{subsec:CCL}, we run our analysis on three independent cell line screen datasets, including \textit{the Genentech Cell line Screening Initiative} (gCSI), \textit{the Cancer Therapeutics Response Portal} (CTRPv2), and \textit{the Genomics of Drug Sensitivity in Cancer} (GDSC2).

To establish a benchmark for performance assessment, we make use of the overlapping cell lines across datasets, as an evaluation of how noisy the ground truth is. The idea is as follows: suppose we have $k$ cell lines in common between training set CTRPv2 and validation set gCSI, with response values $z_1, ..., z_k$ in CTPRv2 and $y_1,...,y_k$ in gCSI. Then we compare $\text{MSE}(z,y)$ to $\text{MSE}(\hat{y},y)$, where $\hat{y}$ are predictions generated by the model. Note that here we no longer center and scale the response $y$ prior to fitting the model, so that both of the mean squared errors are on the original scale and comparable.

\autoref{fig:gCSI} and \autoref{fig:GDSC2} show the comparison of the validation error $\text{MSE}(\hat{y},y)$ from different methods, with the noise $\text{MSE}(z,y)$ in ground truth. We can observe that all three methods are reaching the \textit{performance upper bound}, i.e. MSE are comparable with the noise in ground truth. Furthermore, Lasso forest almost always achieves the optimal performance among all three methods on both validation sets.
\begin{figure}[H]
    \centering
    \includegraphics[width=1\linewidth]{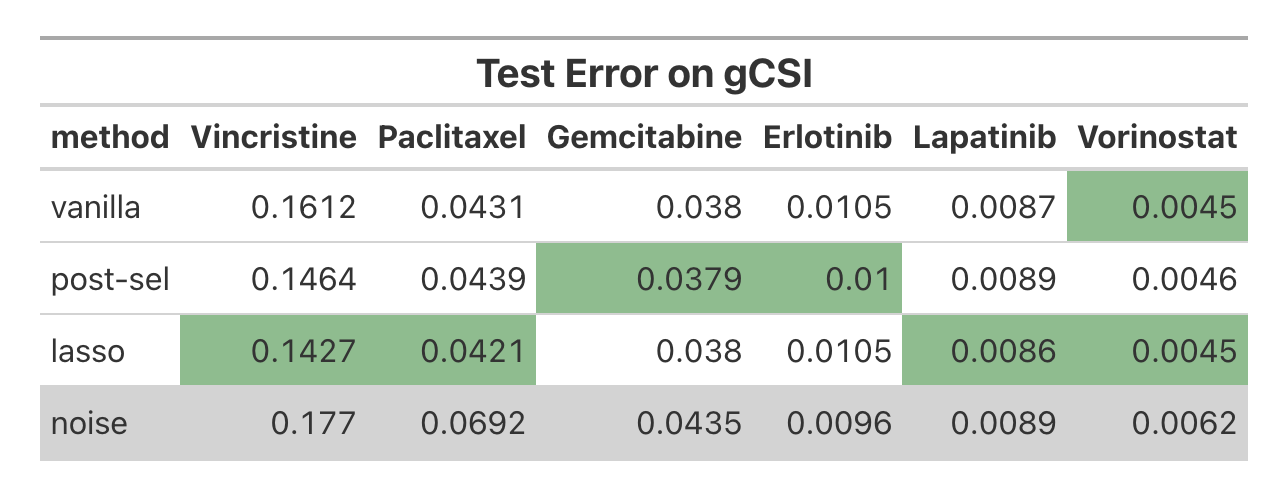}
    \caption{\em{Test Error on gCSI, Trained on CTRPv2}}
    \label{fig:gCSI}
\end{figure}

\begin{figure}[H]
    \centering
    \includegraphics[width=1\linewidth]{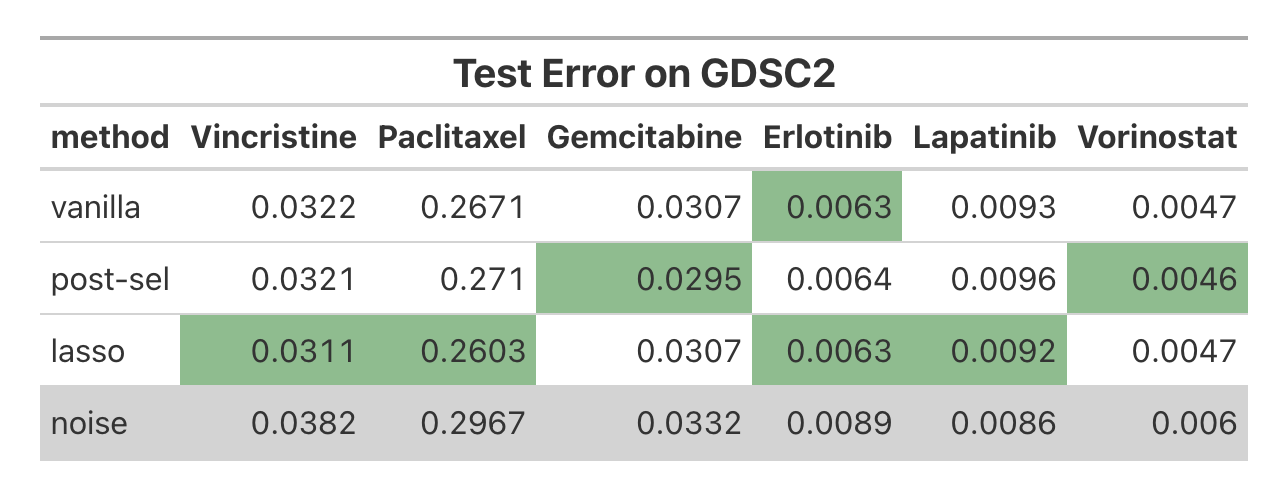}
    \caption{\em{Test Error on GDSC2, Trained on CTRPv2}}
    \label{fig:GDSC2}
\end{figure}

\end{document}